\documentclass{article}
\usepackage{iclr2027_conference,times}

\usepackage{hyperref}
\usepackage{url}
\usepackage{amsmath,amssymb,amsthm}
\usepackage{booktabs}
\usepackage{graphicx}
\usepackage{multirow}

\newcommand{\pt}{p_{\mathrm{T}}}
\newcommand{\ps}{p_{\phi}}
\newcommand{\pb}{p_{\mathrm{base}}}
\newcommand{\aeff}{\alpha_{\mathrm{eff}}}
\newtheorem{theorem}{Theorem}

\title{Sharpen Without Search: On-Policy Distillation of Sequence-Level Power Distribution}

\author{
Erfan Baghaei Potraghloo$^{u\dagger}$ \thanks{Corresponding author: \texttt{baghaeip@usc.edu}.} \; , Seyedarmin Azizi$^{u\dagger}$,
Arya Fayyazi$^{u}$, Saeid Shokoufa$^{u}$,\\
\textbf{Mehdi Kamal$^{u}$, Souvik Kundu$^{i}$, Massoud Pedram$^{u}$}\\
$^{u}$University of Southern California, Los Angeles, USA\\
$^{i}$Intel AI, USA\\
$\dagger$ Equal contribution authors.
}

\iclrfinalcopy

\begin{document}
\maketitle
\lhead{Preprint}

\begin{abstract}

A language model can give a correct answer more probability than any single
incorrect answer and still usually sample an incorrect one, because the incorrect
answers together hold more probability. The \emph{power distribution} raises each
complete answer's probability to a power above one and renormalizes, shifting
probability toward answers the model finds most likely (sharpening). Sampling from
it improves reasoning without changing parameters, but needs many scored candidates
per query. We show that a model can instead be trained to produce such answers in
one generation. On-policy power distillation (\textsc{oppd}) runs a sequential Monte
Carlo sampler in which the model being trained generates candidates and a frozen
teacher's power distribution weights them; the same probabilities weight each answer
in a maximum-likelihood update. Training raises single-generation accuracy by up to
$23.0$ points on MATH500 and $27.3$ on GSM8K over the untrained model at the same
temperature, and one generation scores $2.4$ and $3.5$ points above published power
sampling with $64$ candidates, recovering $94\%$ of the gain that $16$ candidates
give the untrained model. For context, against GRPO trained with verified rewards
from the same checkpoint and budget, \textsc{oppd} scores $3.8$, $4.0$ and $5.4$
points higher on MATH500, GSM8K and AIME using no reference answers; the two are
complementary, and \textsc{oppd} applied after GRPO adds up to $9.3$ points.
Trained only on mathematics, \textsc{oppd} raises HumanEval accuracy by up to $5.3$
points. One loss coefficient moves the sharpening exponent the model absorbs between
$1.19$ and $2.02$, against $1.14$ for ordinary on-policy distillation, and it rises
mostly on the model's own answers. Gains hold across model families and sizes,
including a model already trained with verified rewards, where lowering the
temperature gives nothing and \textsc{oppd} adds $4.4$ points on MATH500. Code:
\url{https://github.com/ArminAzizi98/OPPD}.
\end{abstract}

\begin{figure}[!t]
\centering
\includegraphics[width=\textwidth]{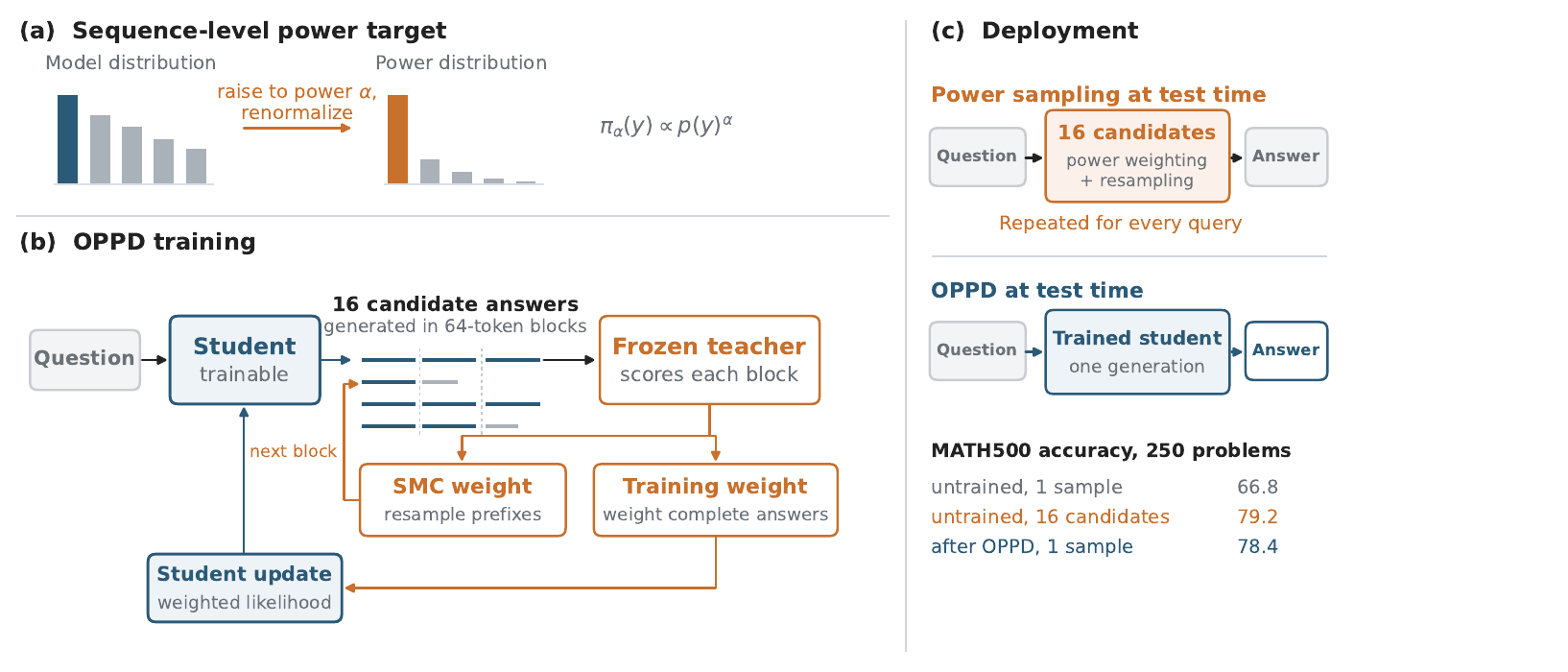}
\caption{\textsc{Oppd} moves power sampling from inference time into training.
(a) The power distribution gives each complete answer a probability proportional to
the model's probability of that answer raised to the power $\alpha$. (b) One
training step: (1) the trainable student generates its own candidate answers, (2) the
candidates grow in blocks of tokens, and (3) a frozen teacher scores each new block.
(4) The teacher probabilities give two weights. The importance weight (``SMC
weight'') decides which partial answers are discarded or duplicated, and the training
weight sets how much each complete answer contributes to the student update.
(c) Power
sampling generates and scores $16$ candidates for every query, while the trained
student generates one answer and obtains $94\%$ of the sampler's gain on MATH500
(Table~\ref{tab:amort}).}
\label{fig:method}
\end{figure}

\section{Introduction}

Let $p$ be an autoregressive language model that assigns probability
$p(y \mid x) = \prod_t p(y_t \mid x, y_{<t})$ to an answer $y$ given a prompt
$x$, where the \emph{prefix} $y_{<t}$ denotes the tokens before position $t$.
Sampling from $p$ can return an incorrect answer even when the correct answer is
more probable than any single incorrect one, because the incorrect answers together
hold more probability. The \emph{power distribution} with exponent $\alpha \ge 1$,
\begin{equation}
\pi_\alpha(y \mid x) = \frac{p(y \mid x)^{\alpha}}{Z_\alpha(x)},
\qquad Z_\alpha(x) = \textstyle\sum_{y'} p(y' \mid x)^{\alpha},
\label{eq:power}
\end{equation}
where $Z_\alpha(x)$ is the normalizing constant, moves probability toward the most
probable answers, an effect we call \emph{sharpening}. Under $\pi_\alpha$, the ratio
between the probabilities of two answers is their ratio under $p$ raised to the power
$\alpha$. Drawing samples from $\pi_\alpha$
instead of from $p$ raises reasoning accuracy by amounts comparable to
reinforcement-learning post-training, without changing any parameter
\citep{karan2026reasoning,ji2026scalable}. On $250$ MATH500 problems,
\texttt{Qwen2.5-Math-7B} scores $66.8\%$ when it generates one answer and $79.2\%$
when a sampler with sixteen candidates draws approximate samples from $\pi_\alpha$.
This sampler, however, has to run again for every query.

In this paper we ask whether this cost can be paid once, during training. Ordinary
distillation trains a student to match a teacher's next-token probabilities. The
next-token probabilities of $\pi_\alpha$, however, depend on all possible
continuations of the prefix, so no teacher forward pass computes them. Samples from
$\pi_\alpha$ can therefore only be drawn approximately, by a sampler run at
inference time. Our method, on-policy power distillation (\textsc{oppd}), trains the
student on the output of such a sampler, the sequential Monte Carlo (SMC) sampler of
\citet{powersmc2026}. This sampler extends $N$ candidate answers in blocks of tokens and gives each
candidate an importance weight, which measures how much more probable it is under
$\pi_\alpha$ than under the distribution that generated it. When these weights become
too uneven, it replaces low-weight candidates with copies of high-weight ones. In \textsc{oppd} the student generates the candidates and a frozen
teacher defines $\pi_\alpha$; in self-distillation, the teacher is a copy of the
initial student. The teacher probabilities that set the importance weights also give
each complete answer a training weight, and the student is updated by weighted
maximum likelihood on its own answers (Figure~\ref{fig:method}). A second loss term,
the token-level anchor with coefficient $\lambda$, keeps the student's next-token
distributions near those of the teacher at temperature $1/\alpha$.

\paragraph{Contributions.} (1) A training method that uses no reference answers,
verifier or reward model, in which one teacher forward pass per block of tokens
supplies both the importance weights and the training weights. (2) A trained model that replaces the
sampler, scoring above published power sampling with $64$ candidates on the same
model and obtaining $94\%$ of the gain that sixteen candidates give the untrained
model. (3) A reference point for the size of this gain. Group
Relative Policy Optimization (GRPO) trains on verified rewards computed from
reference answers, so it uses information \textsc{oppd} never reads. Trained from
the same checkpoint for the same number of steps, \textsc{oppd} scores higher on
MATH500, GSM8K and AIME, and we trace the difference to training groups in which
every sampled answer receives the same reward and GRPO gets no learning signal.
Applied after GRPO, \textsc{oppd} improves the model further, so the two are
complementary. We are not proposing \textsc{oppd} as a replacement for
reinforcement learning from verified rewards, and we compare against one standard
implementation at one budget rather than against that literature. (4) A measurement of how much
sharpening the student absorbs, as an exponent fitted between the student's token
probabilities and those of its initialization (Section~\ref{sec:aeff}). This
exponent decreases monotonically as $\lambda$ increases and rises mainly on the
answers the student writes itself. A quantity measured before training predicts the
best $\lambda$ for each model.

\section{Background}
\label{sec:background}

\subsection{Sharpening complete answers versus sharpening tokens}
\label{sec:whytemp}

Lowering the sampling temperature is cheaper and may appear to have the same
effect, but the two operations differ. Sampling at temperature $\tau$ is
\emph{token-level}: it replaces each next-token distribution by one proportional to
its probabilities raised to the power $1/\tau$, renormalizing at every position. The
power distribution is \emph{sequence-level}: \eqref{eq:power} renormalizes once,
over complete answers. Temperature therefore compares next tokens without
information about the answers they lead to, while the power distribution gives
little total probability to answers that begin with a prefix whose continuations are
all uncertain. In the two-token example of Appendix~\ref{app:toy}, the power
distribution at $\alpha=2$ places $0.213$ on answers beginning with the uncertain
token and temperature $1/2$ places $0.308$ there.

\subsection{Drawing samples from the power distribution}
\label{sec:smc}

The constant $Z_\alpha$ in \eqref{eq:power} sums over every possible answer and
cannot be computed, so samples from $\pi_\alpha$ are drawn approximately with
sequential Monte Carlo \citep{doucet2001smc,delmoral2004feynman,delmoral2006smc}.
The sampler extends $N$ candidate answers in parallel, which the SMC literature calls
particles. The next token of each candidate is drawn from a \emph{proposal
distribution} $q_t$, any distribution that depends only on the prompt and the
prefix. With $\pt$ the model whose power distribution is sampled (the teacher in
\textsc{oppd}), each candidate carries an \emph{importance weight}, the product of
the \emph{incremental importance weights}
\begin{equation}
\omega_t = \frac{\pt(y_t \mid x, y_{<t})^{\alpha}}{q_t(y_t \mid x, y_{<t})},
\label{eq:incw}
\end{equation}
over its tokens so far. Candidate $i$ thus has importance weight
$\tilde W^{(i)}_t = \prod_{s \le t} \omega^{(i)}_s$ after $t$ tokens. Rescaling
these weights to sum to one gives the \emph{normalized importance weights}
$W^{(i)}$. The \emph{effective sample size}, $\mathrm{ESS} = (\sum_i
(W^{(i)})^2)^{-1}$, approximates the number of candidates with non-negligible weight:
it equals $N$ when all weights are equal and $1$ when one candidate holds all of the
weight. When the ESS falls below a threshold, the sampler \emph{resamples} the
candidates: it discards low-weight candidates, duplicates high-weight ones and
resets all weights to be equal. Resampling moves computation toward high-weight
prefixes while the answers are still being written (Appendix~\ref{app:smc}).

\paragraph{The effective sample size measures a divergence.} The \emph{R\'enyi
divergence} of order $\beta$ between distributions $P$ and $Q$ is
\begin{equation}
D_\beta(P \Vert Q) \;=\; \frac{1}{\beta - 1}\,
\log \sum_y P(y)^{\beta}\, Q(y)^{1-\beta},
\qquad \beta > 0,\; \beta \neq 1,
\label{eq:renyidiv}
\end{equation}
with $\beta = 1$ defined by continuity as the Kullback--Leibler (KL) divergence
\citep{renyi1961measures}. Let $q$ be the proposal probability of a complete answer
and $w = \pi_\alpha/q$, its importance weight divided by $Z_\alpha$. Then
$\mathbb{E}_q[w]=1$ and
$\mathbb{E}_q[w^2] = e^{D_2(\pi_\alpha \Vert q)}$, so for large $N$
\begin{equation}
\mathrm{ESS}/N \;\approx\; \mathbb{E}_q[w]^2/\mathbb{E}_q[w^2]
\;=\; e^{-D_2(\pi_\alpha \Vert q)} .
\label{eq:exchange}
\end{equation}
The ESS measured while the sampler runs therefore estimates a divergence between
$\pi_\alpha$ and the proposal distribution, which we use in
Section~\ref{sec:mechmain}.

\begin{theorem}[Best proposal distribution that depends only on the prefix]
\label{thm:opt}
Consider all proposal distributions that are positive wherever $\pt$ is. The
unique choice that minimizes the conditional variance of the incremental
importance weight \eqref{eq:incw} is to sample from the teacher at temperature
$1/\alpha$, that is $q^{\star}_t(v) \propto \pt(v \mid x, y_{<t})^{\alpha}$.
Under this choice $\omega_t$ equals $Z_t = \sum_u \pt(u \mid x, y_{<t})^{\alpha}$
whatever token is drawn.
\end{theorem}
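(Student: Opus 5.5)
Fix the prompt $x$ and prefix $y_{<t}$, and write $a_v = \pt(v \mid x, y_{<t})^{\alpha}$ and $q(v)=q_t(v\mid x,y_{<t})$. The variance in the statement is taken over the token drawn from $q$. The plan is to compute the first two moments of $\omega_t = a_{y_t}/q(y_t)$ under $y_t\sim q$. We then observe that the mean does not depend on $q$, which reduces the problem to minimizing a second moment with Cauchy--Schwarz.

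First, the positivity assumption gives that $q(v)>0$ whenever $a_v>0$. Hence
\begin{equation*}
\mathbb{E}_{q}[\omega_t] = \sum_{v:\,q(v)>0} q(v)\,\frac{a_v}{q(v)} = \sum_v a_v = Z_t ,
\end{equation*}
which is the same for every admissible proposal. Tokens with $a_v=0$ contribute nothing, whether or not $q$ puts mass on them. The variance is therefore $\mathrm{Var}_q(\omega_t)=\sum_v a_v^2/q(v) - Z_t^2$, and minimizing it amounts to minimizing $S(q)=\sum_{v:\,a_v>0} a_v^2/q(v)$.

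Second, apply Cauchy--Schwarz to the vectors $(a_v/\sqrt{q(v)})_v$ and $(\sqrt{q(v)})_v$, summed over the support of $a$:
\begin{equation*}
Z_t^2 = \Big(\sum_v \tfrac{a_v}{\sqrt{q(v)}}\sqrt{q(v)}\Big)^2 \le S(q)\sum_{v:\,a_v>0} q(v) \le S(q).
\end{equation*}
This shows $\mathrm{Var}_q(\omega_t)\ge 0$, and the bound is attained by $q^\star(v)=a_v/Z_t$. Under $q^\star$ we get $\omega_t = a_{y_t}/(a_{y_t}/Z_t)=Z_t$ identically for every drawn token, which is the final claim of the statement.

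The only delicate part is uniqueness, and I expect it to be the main obstacle. Variance zero forces $\omega_t$ to be $q$-almost surely constant, and that constant must equal its mean $Z_t$. So $q(v)=a_v/Z_t$ for every $v$ with $q(v)>0$. Summing over these tokens gives total mass $\sum_{q(v)>0}a_v/Z_t$. By positivity this sum runs over all $v$ with $a_v>0$, so it equals $1$. Hence $q$ puts no mass outside the support of $a$ and coincides with $q^\star$. Equivalently, equality must hold in both inequalities of the Cauchy--Schwarz step: the second forces $q$ to vanish off the support of $a$, and the first forces $q\propto a$ on it. Finally, $q^\star\propto \pt^{\alpha}$ is by definition the next-token distribution of the teacher at temperature $1/\alpha$, as in Section~\ref{sec:whytemp}.
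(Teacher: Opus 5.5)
Your proof is correct. It shares the paper's reduction: under $q$ the mean of $\omega_t$ is $Z_t$ for every admissible proposal, so one minimizes the second moment $\sum_v a_v^2/q(v)$. The routes diverge after that. The paper writes a Lagrangian and solves the stationarity condition $-\pt(v)^{2\alpha}/q(v)^2+\mu=0$, while you use Cauchy--Schwarz together with the characterization of zero variance.

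The Lagrangian route produces the form $q\propto\pt^{\alpha}$ directly, but on its own it only locates a critical point. Global optimality then comes from noticing that the variance vanishes there. Uniqueness rests on that critical point being the only one, which implicitly needs strict convexity of the objective. It also needs a separate treatment of tokens with $\pt(v)=0$, where the stationarity equation would force $\mu=0$, and the paper does not give one.

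Your argument delivers the lower bound and its equality case in one step. Your uniqueness step closes both points: zero variance forces $\omega_t=Z_t$ at every token $q$ can draw, so $q(v)=a_v/Z_t$ there. That excludes mass on tokens with $a_v=0$ and fixes $q$ on the support of $a$.

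One small point: the sentence about summing masses is not what yields ``no mass outside the support of $a$''. That sum equals $1$ for any $q$ satisfying the positivity assumption. The conclusion follows from the pointwise identity $q(v)=a_v/Z_t$ for $q(v)>0$, which already forces $a_v>0$ wherever $q$ puts mass.
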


This is the locally optimal proposal distribution of \citet{doucet2000smc}, written
for the power distribution; the proof is in Appendix~\ref{app:proofs}. The R\'enyi
entropy
$H_\alpha(p) = \frac{1}{1-\alpha}\log\sum_v p(v)^{\alpha}$ is small when $p$ is
concentrated, and $\log Z_t = (1-\alpha)H_\alpha(\pt(\cdot\mid x,y_{<t}))$, so
summing over positions gives the log importance weight of an answer under $q^\star$,
\begin{equation}
\log \tilde W = (1-\alpha)\textstyle\sum_t H_\alpha\!\left(\pt(\cdot \mid x, y_{<t})\right),
\label{eq:renyi}
\end{equation}
which is non-positive because $\alpha>1$. A candidate therefore keeps its weight
when the teacher was confident at every position of its answer and loses weight
where the teacher was uncertain. This is the sampler's only selection rule, and it
uses only the teacher's output probabilities.

As usual in SMC, the sampler does not estimate the twist function $V_t =
\sum_{y_{>t}} \pt(y_{>t}\mid y_{1:t})^{\alpha}$, on which the next-token distribution
of $\pi_\alpha$ depends. Weighting and resampling correct for it.

\section{Method}
\label{sec:method}

We want the student $\ps$, with parameters $\phi$, to minimize
$D_\beta(\pi_\alpha \Vert \ps)$ for some $\beta > 0$. Because $Z_\alpha$ cannot be computed, $\pi_\alpha$ can be neither
evaluated nor sampled exactly, and the gradient of the divergence has both
problems. Section~\ref{sec:grad} rewrites the gradient to avoid both, and
Sections~\ref{sec:where}--\ref{sec:loss} supply the samples, the proposal and the
loss, following steps (1) to (4) of Figure~\ref{fig:method}b.

\subsection{What the gradient requires}
\label{sec:grad}

\begin{theorem}[Gradient of the R\'enyi divergence]
\label{thm:master}
For every $\beta>0$,
$\nabla_\phi D_\beta(\pi_\alpha \Vert \ps)
 = -\,\mathbb{E}_{\bar\pi^{(\beta)}}\!\left[\nabla_\phi \log \ps(y)\right]$,
where $\bar\pi^{(\beta)} \propto \pt^{\alpha\beta}\,\ps^{\,1-\beta}$.
\end{theorem}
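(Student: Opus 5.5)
The plan is to differentiate the definition \eqref{eq:renyidiv} directly with $P=\pi_\alpha$ and $Q=\ps$, treating the case $\beta\neq 1$ first and the case $\beta=1$ separately by continuity. Only $\ps$ depends on $\phi$; the target $\pi_\alpha$, and in particular the constant $Z_\alpha(x)$, does not. Write
\[
S(\phi)=\textstyle\sum_y \pi_\alpha(y)^{\beta}\,\ps(y)^{1-\beta},
\qquad\text{so that}\qquad
D_\beta=\tfrac{1}{\beta-1}\log S(\phi).
\]

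For $\beta\neq1$ the chain rule gives $\nabla_\phi D_\beta=\frac{1}{\beta-1}\,\nabla_\phi S/S$. Differentiating each term of $S$ and using the identity $\nabla_\phi \ps^{1-\beta}=(1-\beta)\,\ps^{1-\beta}\,\nabla_\phi\log\ps$, the factor $(1-\beta)$ cancels against $\frac{1}{\beta-1}$ up to a sign. This yields
\[
\nabla_\phi D_\beta=-\sum_y \frac{\pi_\alpha(y)^\beta\,\ps(y)^{1-\beta}}{S}\,\nabla_\phi\log\ps(y).
\]
The weights in this sum are a normalized distribution proportional to $\pi_\alpha^\beta\ps^{1-\beta}$. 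Substituting $\pi_\alpha=\pt^{\alpha}/Z_\alpha$ shows they are proportional to $\pt^{\alpha\beta}\ps^{1-\beta}$, because the factor $Z_\alpha^{-\beta}$ is constant in $y$ and cancels on normalization. This is exactly $\bar\pi^{(\beta)}$. The point to stress is that the intractable $Z_\alpha$ disappears, which is what the method needs.

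For $\beta=1$, $D_1=\mathrm{KL}(\pi_\alpha\Vert\ps)=\sum_y\pi_\alpha\log\pi_\alpha-\sum_y\pi_\alpha\log\ps$. Its gradient is $-\mathbb{E}_{\pi_\alpha}[\nabla_\phi\log\ps]$, and $\bar\pi^{(1)}\propto\pt^{\alpha}=\pi_\alpha$, so the formula holds there as well. The same conclusion also follows by letting $\beta\to1$ in the general expression.

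The only step needing care is exchanging $\nabla_\phi$ with the sum over answers. I would handle it first by noting that with a finite vocabulary and a maximum generation length the answer space is finite, so the sum is finite and the exchange is trivial. For unbounded length I would fall back on dominated convergence, assuming $\ps$ is smooth in $\phi$ and $\nabla_\phi\log\ps$ is suitably bounded. One also needs $S\in(0,\infty)$ and $\ps>0$ wherever $\pt>0$; I would state these as standing assumptions. The softmax parameterization of $\ps$ guarantees positivity.
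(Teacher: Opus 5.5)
Your proposal is correct and follows the paper's proof essentially line for line: define $S$, apply the chain rule so that the factor $(1-\beta)/(\beta-1)=-1$ appears, normalize by $S$, and note that $Z_\alpha^{-\beta}$ is absorbed into the normalizer. Two small additions go beyond what the paper writes out: you compute the $\beta=1$ case directly as a KL gradient instead of appealing only to continuity, and you state the conditions for interchanging $\nabla_\phi$ with the sum (finite answer space, positivity of the student), which the paper leaves implicit.
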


The factor $\tfrac{1}{\beta-1}$ in \eqref{eq:renyidiv} cancels the factor
$(1-\beta)$ from the chain rule for every $\beta$, and $Z_\alpha$, which does not
depend on $\phi$, drops out of the gradient although it appears in the objective
(Appendix~\ref{app:proofs}). One estimator therefore serves every $\beta$, which
makes $\beta$ a free parameter. In practice, answers can be drawn from any $q$
positive wherever $\bar\pi^{(\beta)}$ is, weighted by $\bar\pi^{(\beta)}(y)/q(y)$,
and used in weighted maximum likelihood, the estimator of reweighted wake-sleep
\citep{bornschein2015rws} and of Markovian score climbing \citep{naesseth2020msc}
applied here to the R\'enyi family. Holding the weights constant during differentiation is
exact, since the derivation accounts for how $\bar\pi^{(\beta)}$ depends on $\phi$
(Appendices~\ref{app:derivation} and~\ref{app:verify}).

\subsection{Where the answers come from}
\label{sec:where}

The weight $\bar\pi^{(\beta)}(y)/q(y) \propto
\pt(y)^{\alpha\beta}\ps(y)^{1-\beta}/q(y)$ needs three numbers at every token: the
teacher's $\log\pt(y_t)$, the student's $\log\ps(y_t)$ and the proposal's
$\log q_t(y_t)$. When the student is the proposal distribution, the sampler of
Section~\ref{sec:smc} already computes all three: \eqref{eq:incw} requires
$\log\pt(y_t)$ and $\log q_t(y_t)$, and the student computes $\log\ps(y_t)$ while
generating the token. This choice adds no model evaluations, and one
teacher forward pass per block supplies both the importance weights that decide
resampling and the weights of Theorem~\ref{thm:master} (step (4) of
Figure~\ref{fig:method}b). Because the weighted answers are its own, the student is trained on the prefixes its
own errors lead to \citep{kim2016sequence,agarwal2024gkd,gu2024minillm}.

\subsection{Choosing the proposal temperature}
\label{sec:temp}

The proposal distribution can be the student as it stands, $q_t = \ps$, or the
student at temperature $1/\alpha$, $q_t \propto \ps^{\alpha}$. With $q_t = \ps$,
\begin{equation}
\log \omega_t = \underbrace{\alpha\!\left(\log\pt(y_t) - \log\ps(y_t)\right)}_{\text{teacher--student difference}}
\;+\; \underbrace{(\alpha-1)\log\ps(y_t)}_{\text{student's log-probability of its own draw}} .
\label{eq:decomp}
\end{equation}
Only the first term carries information useful for training. The second depends
only on which token the student happened to sample, so its distribution is the same
for good and bad answers. Its standard deviation is $4.6$ nats (natural-log units)
per token at $\alpha = 4$. Over a block of $\Delta = 64$ tokens, the log
importance weights of two candidates then differ by about $4.6\sqrt{64} \approx 37$
nats for reasons unrelated to their content. One candidate then receives all of the
normalized weight, and the measured $\mathrm{ESS}/N$ stays at $1/N = 0.0625$ from the
first block onward
(Appendix~\ref{app:proposal}). The proposal $q_t \propto \ps^{\alpha}$ is the
variance-optimal choice of Theorem~\ref{thm:opt} with the student in place of the
teacher, and with it $\log q_t(y_t) = \alpha\log\ps(y_t) - \log Z^{\phi}_t$ with
$Z^{\phi}_t = \sum_u \ps(u \mid x, y_{<t})^{\alpha}$, and
$\log\omega_t = \alpha(\log\pt(y_t) - \log\ps(y_t)) + \log Z^{\phi}_t$. The second
term of \eqref{eq:decomp} disappears, and $\log Z^{\phi}_t$ depends on the prefix but
not on the token drawn, so it adds no sampling noise. With this proposal, the
measured $\mathrm{ESS}/N$ during training lies between $0.39$ and $0.64$ for every
model we train.

\subsection{The loss}
\label{sec:loss}

Theorem~\ref{thm:master} averages over $\bar\pi^{(\beta)}$, but the candidates
are drawn from $q$. Self-normalized importance sampling corrects for this mismatch
by assigning candidate $i$ the \emph{training weight}
\begin{equation}
\tilde A^{(i)} \;=\; \frac{\bar\pi^{(\beta)}\big(y^{(i)}\big)}{q\big(y^{(i)}\big)},
\qquad
\bar A^{(i)} \;=\; \frac{\tilde A^{(i)}}{\sum_{j=1}^{N} \tilde A^{(j)}},
\qquad
\sum_{i=1}^{N} \bar A^{(i)} = 1 .
\label{eq:trainweight}
\end{equation}
Both $\bar\pi^{(\beta)}$ and $q$ factor over tokens, and their products over
hundreds of tokens underflow, so we accumulate the logarithm one token at a time:
\begin{equation}
\log \tilde A^{(i)} = \sum_t \Big[\alpha\beta \log \pt(y^{(i)}_t \mid \cdot)
+ (1-\beta)\log \ps(y^{(i)}_t \mid \cdot) - \log q_t(y^{(i)}_t \mid \cdot)\Big] ,
\label{eq:lossweight}
\end{equation}
where each term is conditioned on $x$ and $y^{(i)}_{<t}$, and $\ps$ is the value
recorded when the token was drawn. Any constant common to all candidates cancels in
the normalization. The training weight is distinct from the importance weight of
\eqref{eq:incw}, which decides when the sampler resamples the candidates. Their
per-token increments agree at $\beta=1$, but the sum in \eqref{eq:lossweight}
restarts at each block boundary while the importance weight accumulates from the
start of the answer.

Replacing the expectation in Theorem~\ref{thm:master} by the weighted average over
the $N$ candidates gives a consistent gradient estimate with bias of order $1/N$.
Holding $\bar A^{(i)}$ fixed during differentiation makes the estimate the gradient
of a loss, and dividing the loss by the mean answer length $\bar T = \frac1N \sum_i T_i$
keeps its scale independent of answer length (Appendix~\ref{app:derivation}).
Writing $\pi^{(\alpha)}_{\mathrm{T}} \propto \pt^{\alpha}$ for the teacher's
next-token distribution at temperature $1/\alpha$, the objective is
\begin{equation}
\mathcal{L}(\phi) =
\underbrace{-\frac{1}{\bar T}\sum_{i}\big[\bar A^{(i)}\big]_{\mathrm{sg}}
\sum_{t} \log \ps\!\left(y^{(i)}_t \mid x, y^{(i)}_{<t}\right)}_{\text{sequence-level term}}
\;+\;\lambda\,
\underbrace{\frac{1}{\bar T N}\sum_{i,t}
\mathrm{KL}\!\left(\pi^{(\alpha)}_{\mathrm{T}} \,\Vert\, \ps\right)}_{\text{token-level anchor}} ,
\label{eq:loss}
\end{equation}
where $[\cdot]_{\mathrm{sg}}$ is held fixed during differentiation and both KL
arguments are conditioned on $x$ and $y^{(i)}_{<t}$.

The sequence-level term is the core of the method: its optimum is $\pi_\alpha$, it is
the only term that uses information about complete answers, and it is the only term
left when $\lambda = 0$. The token-level anchor does not follow from
Theorem~\ref{thm:master} and moves the attainable optimum toward the teacher at
temperature $1/\alpha$, the token-level sharpening of Section~\ref{sec:whytemp}. We
include it because this forward KL is mass-covering: it penalizes the student for
giving little probability to tokens that $\pi^{(\alpha)}_{\mathrm{T}}$ considers
likely. It therefore resists the concentration of all probability onto one answer,
a failure we observe on models whose outputs already have little diversity. The
coefficient $\lambda$ trades closeness to $\pi_\alpha$ against this protection
(Section~\ref{sec:mechmain}). The training weights multiply a log-likelihood rather
than a divergence, because a weighted token-level divergence cannot reach
$\pi_\alpha$ (Appendix~\ref{app:wrongloss}).

\subsection{Implementation}
\label{sec:impl}

The student generates $N = 16$ candidates at temperature $1/\alpha$ in blocks of
$\Delta = 64$ tokens. The teacher scores all $N\Delta$ new tokens of a block in one
forward pass, which reduces the sequential teacher calls for a $T$-token answer
from $T$ to $T/\Delta$. Three details affect correctness: the importance and
training weights use separate accumulators; recorded training weights are
renormalized after every resampling step, so that duplicated candidates are not
counted more than once (Appendix~\ref{app:resample}); and finished candidates are
charged the mean per-token decrease in log importance weight of unfinished ones, so
that stopping early does not raise a candidate's weight
(Appendix~\ref{app:brevity}).

\section{Measuring absorbed sharpening}
\label{sec:aeff}

Higher accuracy alone does not show that the student moved toward $\pi_\alpha$,
so we test a consequence that is specific to this movement. Let $\pb$ be the model
the student was initialized from, and let a \emph{context} $c$ be a prompt with a
prefix. If training sharpened the student relative to $\pb$ with some exponent $a$,
so that $\ps \approx \pb^{\,a}/Z$, then
\begin{equation}
\log \ps(v \mid c) = a\,\log \pb(v \mid c) - \log Z(c),
\label{eq:aeff}
\end{equation}
which is a line of slope $a$ in $\log \pb(v \mid c)$ at each context $c$. A slope of
$1$ means that no sharpening was absorbed, and a slope of $\alpha$ means that the
student equals $\pb$ at temperature $1/\alpha$. We estimate the slope by least
squares of $\log\ps$ on $\log\pb$ over the $200$ tokens most likely under $\pb$,
weighted by $\pb(v \mid c)$ so that the near-zero tail does not dominate. We
report the median slope across contexts as the \emph{absorbed exponent}
$\aeff(\pb \to \ps)$. In self-distillation $\aeff(\pb \to \pt) = 1.000$ exactly, so
no measured increase can come from copying a sharper model; with a larger teacher,
the control is ordinary on-policy distillation against the same teacher. The slope
is dominated by noise when $\pb$ is close to deterministic, so we report $\aeff$
only where the entropy of $\pb$ is large enough for the fit to separate the models,
and a direct entropy measurement elsewhere. Appendix~\ref{app:fullmech} gives both
and validates the estimator on synthetic logits.

\section{Experiments}
\label{sec:exp}

\paragraph{Setup.} Our main experiments use \texttt{Qwen2.5-Math-7B}
\citep{yang2024qwen25} in self-distillation, which makes the control of
Section~\ref{sec:aeff} exact. We repeat the training on \texttt{Qwen3-8B-Base}
\citep{yang2025qwen3}, \texttt{Llama-3.1-8B-Instruct} \citep{dubey2024llama3},
\texttt{Qwen3-14B-Base} and \texttt{deepseek-math-7b-rl}
\citep{shao2024deepseekmath}, and use \texttt{Qwen2.5-Math-1.5B} as the student for the
sweep over $\lambda$. Training prompts are GSM8K questions \citep{cobbe2021gsm8k}
except in Appendix~\ref{sec:openr1}, and no reference answer is used in training.
We set $\alpha=4$, increased linearly from $1$ over the first $20\%$ of steps, with
$N=16$ candidates, blocks of $\Delta=64$ tokens, resampling when the ESS falls below
$N/2$, and $\beta=0.7$. Training runs for $300$ steps at two prompts per step
(Appendix~\ref{app:setup}). We generate four answers per problem for $200$ problems
each from GSM8K and MATH500 \citep{lightman2024lets}, graded with the reference
power-sampling implementation. We do the same for AIME $2024$ and $2025$ pooled ($60$
problems, $3500$-token limit; $4096$ for \texttt{Qwen3-14B-Base}). \emph{pass@1} is the
fraction of the four answers that are correct, averaged over problems, and
\emph{best-of-4} counts a problem solved if any answer is correct. The paired
standard error on a pass@1 difference is about $2.9$ points. Our baseline is ordinary
on-policy distillation against the teacher at temperature $1/\alpha$, which is
\eqref{eq:loss} with the anchor alone and no training weights or resampling. Our
measurements of the untrained model are within $3$ points of published numbers, in
both directions (Appendix~\ref{app:harness}). \texttt{Qwen2.5-Math-7B} improves under
many training signals, including rewards uncorrelated with correctness
\citep{shao2025spurious}, so the replications on other families in
Section~\ref{sec:replmain} carry as much of the claim as this model does.

\subsection{Accuracy}
\label{sec:acc}

\begin{table}[t]
\caption{Self-distillation on \texttt{Qwen2.5-Math-7B}. The published rows are
for the same model. Generating a single answer, the model trained with \textsc{oppd}
scores above power sampling with $64$ candidates and matches GRPO, which uses
reference answers during training.}
\label{tab:main}
\centering\small
\begin{tabular}{lcccc}
\toprule
 & \multicolumn{2}{c}{MATH500} & \multicolumn{2}{c}{GSM8K} \\
\cmidrule(lr){2-3}\cmidrule(lr){4-5}
 & pass@1 & best-of-4 & pass@1 & best-of-4 \\
\midrule
\multicolumn{5}{l}{\emph{sampling at temperature $1$}}\\
untrained & $50.25$ & $82.0$ & $59.25$ & $91.0$ \\
on-policy distillation (baseline) & $62.25$ & $86.5$ & $74.88$ & $93.5$ \\
\textsc{oppd}, $\lambda=1$ & $64.00$ & $84.0$ & $79.38$ & $96.0$ \\
\textsc{oppd}, $\lambda=0$ & $73.25$ & $88.5$ & $86.50$ & $95.5$ \\
\midrule
\multicolumn{5}{l}{\emph{sampling at temperature $1/\alpha$}}\\
untrained & $70.13$ & $85.0$ & $76.13$ & $92.5$ \\
on-policy distillation (baseline) & $68.25$ & $84.0$ & $74.38$ & $90.0$ \\
\textsc{oppd}, $\lambda=1$ & $72.88$ & $83.5$ & $86.88$ & $93.0$ \\
\textsc{oppd}, $\lambda=0$ & $\mathbf{78.50}$ & $86.0$ & $\mathbf{91.13}$ & $95.5$ \\
\midrule
\multicolumn{5}{l}{\emph{published for this model}}\\
power sampling, $64$ candidates \citep{powersmc2026} & $76.1$ & --- & $87.6$ & --- \\
GRPO \citep{powersmc2026} & $78.0$ & --- & --- & --- \\
\bottomrule
\end{tabular}
\end{table}

Generating one answer at temperature $1/\alpha$, the model trained with
\textsc{oppd} scores $2.4$ points above published power sampling with $64$
candidates on MATH500 and $3.5$ points above it on GSM8K (Table~\ref{tab:main}). Two further seeds score $79.50$ and $81.25$ on MATH500, so every run exceeds
published power sampling (Appendix~\ref{sec:seeds}). At temperature $1$, this model gains $23.0$
and $27.3$ points over the untrained model and
$11.0$ and $11.6$ points over ordinary on-policy distillation. Best-of-4 also rises on both benchmarks, so the model solves
more problems rather than repeating the same answer. Lowering the temperature to $1/\alpha$ requires no
training, so comparing models at that temperature separates the contribution of
training from that of decoding. At this temperature, ordinary on-policy distillation
loses $1.9$ and $1.8$ points relative to the untrained model, both within one
standard error, while \textsc{oppd} gains $8.4$ and $15.0$. The single-answer result also matches
published GRPO on MATH500, $78.5$ against $78.0$ (Table~\ref{tab:main}). The
published numbers come from a different evaluation protocol. Under ours, on the same
$250$ problems, the sixteen-candidate sampler scores $79.2$ and \textsc{oppd} $78.4$
(Table~\ref{tab:amort}). On $250$ MATH500 problems, sixteen candidates
add $12.4$ points for the untrained model, of which \textsc{oppd} obtains $11.6$
($94\%$) with a single generation. Running the sampler on the trained model adds only
$1.6$ more (Appendix~\ref{sec:amort}).

\subsection{Comparison with training on verified rewards}
\label{sec:rlmain}

GRPO shows how large a gain is usually available when reference answers can be used.
We train the two methods from the same checkpoint to put \textsc{oppd} on that scale,
not to argue that one should replace the other, and they compose.

We train \textsc{oppd} and GRPO \citep{shao2024deepseekmath} from the same
\texttt{Qwen3-8B-Base} checkpoint for $300$ optimizer steps each. At each step, both
draw two prompts and sample sixteen answers of at most $1024$ tokens per prompt, so
each generates $9{,}600$ answers. Our GRPO is the standard version, with one
optimization pass over each batch of samples and a binary reward from the evaluation
grader. Both runs are small next to published GRPO training, $600$ prompt draws of at
most $1024$ tokens each, so the comparison bounds neither method at a larger budget. \textsc{oppd} scores higher on all three benchmarks
(Table~\ref{tab:head2head}), by $3.75$ points on MATH500, $4.00$ on GSM8K and $5.42$
on AIME, each between one and two standard errors and all three with the same sign. GRPO learns only from groups of answers whose rewards differ, and $234$ of the $600$
groups ($39\%$) gave every answer the same reward and no gradient, while every prompt
drawn produced a gradient for \textsc{oppd} (Appendix~\ref{sec:head2head}).

The two methods also combine. Applied to a \texttt{Qwen2.5-Math-7B} checkpoint that
GRPO had already trained from $50.25$ to $64.00$ on MATH500 and $59.25$ to $71.00$ on
GSM8K, \textsc{oppd} adds $9.25$ and $8.38$ more, $3.1$ and $2.8$ standard errors
(Table~\ref{tab:grpo}), and AIME pass@1 rises from $10.83\%$ with either method alone
to $12.92\%$ with both (Appendix~\ref{sec:aime}).

\subsection{Evidence that the student moves toward the power distribution}
\label{sec:mechmain}

\begin{figure}[t]
\centering
\includegraphics[width=0.56\textwidth]{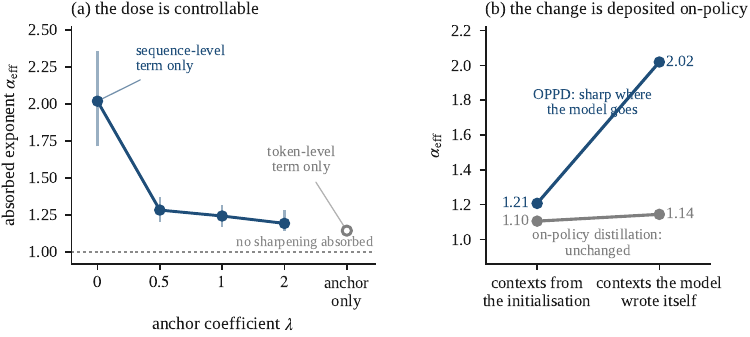}
\caption{(a) The absorbed exponent $\aeff$ decreases monotonically as the anchor
coefficient $\lambda$ increases, across five settings; bars are interquartile ranges
over per-context fits. (b) The same estimator on contexts written by the
initialization and on the model's own answers. The two agree for ordinary on-policy
distillation, while for \textsc{oppd} the estimate is much higher on its own answers,
as expected when the training signal comes only from those answers. Exact values in
Table~\ref{tab:dose}.}
\label{fig:mechanism}
\end{figure}

Training should reduce $D_\beta(\pi_\alpha \Vert \ps)$, and \eqref{eq:exchange} lets
us check this on held-out problems by measuring $\mathrm{ESS}/N$ with each model as
the proposal. The implied divergence $\hat D_2 = -\log(\mathrm{ESS}/N)$
falls by $21.6\%$ on \texttt{Qwen2.5-Math-7B} and by $26.6\%$, $24.7\%$ and $12.6\%$
at $N=4$, $16$ and $64$ on \texttt{Qwen3-8B-Base}. Along the answers of the model trained at $\lambda=0$, the
untrained model's token entropy is $0.32$ times its entropy along its own answers,
against $0.47$ for ordinary on-policy distillation (Appendix~\ref{sec:mech}).

On the $\lambda$ sweep, where \texttt{Qwen2.5-Math-1.5B} is trained with
\texttt{Qwen2.5-Math-7B} as the teacher, the absorbed exponent decreases as $\lambda$
increases across all five settings. It is $2.02$ with the sequence-level term alone
and $1.14$ for ordinary on-policy distillation, and moving
$\lambda$ from $0.5$ to $2$ changes it only from $1.28$ to $1.19$
(Figure~\ref{fig:mechanism}, Table~\ref{tab:dose}). On contexts written by the initialization
rather than by the model itself, ordinary distillation gives almost the same value and
\textsc{oppd} does not ($1.207$ against $2.019$), as expected when the signal comes
only from the student's own answers (Appendix~\ref{sec:dose}).

The best value of $\lambda$ depends on the model, and the MATH500 gain an untrained
model gets from lowering its temperature to $1/\alpha$ predicts it. This gain is
$19.9$ points on \texttt{Qwen2.5-Math-7B}, which does best at $\lambda=0$, and $0.6$
on \texttt{Llama-3.1-8B-Instruct}, which does best at $\lambda=1$ and loses $4.75$
points of MATH500 without the anchor. A large gain means that sharpening is still
available, so \texttt{Qwen3-8B-Base} ($13.0$) is trained at $\lambda=0$ (Appendix~\ref{sec:predict}).

\subsection{Other models and scale}
\label{sec:replmain}

On \texttt{Qwen3-8B-Base} at temperature $1$, \textsc{oppd} raises MATH500 from
$65.50$ to $78.25$ and GSM8K from $78.00$ to $95.13$, which are $4.3$ and $5.8$
standard errors. Both scores are above ordinary on-policy distillation ($75.00$ and
$93.00$), and best-of-4 rises on both. On \texttt{Llama-3.1-8B-Instruct} at $\lambda=1$, \textsc{oppd}
gains $3.25$ and $3.13$ points over the untrained model and $2.75$ on MATH500 over
ordinary on-policy distillation (Table~\ref{tab:repl}). On
\texttt{Qwen3-14B-Base} at $\lambda=0$ and temperature $1$, \textsc{oppd} reaches
$84.00$ on MATH500, $95.88$ on GSM8K and $16.25$ on AIME, above the untrained model's
best at either temperature ($82.00$, $92.38$ and $12.92$), and adds $18.50$ points on
MATH500 against $12.75$ on \texttt{Qwen3-8B-Base} (Table~\ref{tab:scale}).

\texttt{deepseek-math-7b-rl} \citep{shao2024deepseekmath}, from a third family, was
already trained with verified rewards, and lowering its temperature to $1/\alpha$
gains $+0.00$ points on MATH500. \textsc{oppd} at the prescribed $\lambda=1$ raises
MATH500 from $47.12$ to $51.50$ ($95\%$ interval $[+0.88, +7.88]$) and GSM8K from
$85.38$ to $87.75$ (Appendix~\ref{sec:deepseek}).

Because the method uses no reference answers, its training questions can come from
any source: training \texttt{Llama-3.1-8B-Instruct} on $4000$
\texttt{OpenR1-Math-220k} problem statements gives $51.13$ on MATH500 and $85.88$ on
GSM8K, against $52.50$ and $85.38$ with GSM8K prompts (Appendix~\ref{sec:openr1}).

\subsection{Code generation}
\label{sec:humaneval}

HumanEval \citep{chen2021codex} scores a model's implementation of a Python function,
given its signature, by running unit tests. We evaluate the \textsc{oppd} checkpoints of four
models (two for \texttt{Qwen2.5-Math-7B}) and the GRPO checkpoint of
Section~\ref{sec:rlmain}, each with its default generation configuration.

\begin{table}[t]
\caption{HumanEval, $164$ problems, four samples each, pass@1. Every trained
model was trained on mathematics questions only. $\Delta$ is trained minus untrained pass@1 on the
same problems, with a $95\%$ interval from a paired bootstrap ($100{,}000$
resamples).}
\label{tab:humaneval}
\centering\small
\begin{tabular}{llccc}
\toprule
model & training & untrained & trained & $\Delta$ [$95\%$ interval] \\
\midrule
\texttt{Qwen3-8B-Base} & \textsc{oppd}, $\lambda=0$ & $76.52$ & $\mathbf{81.86}$ & $+5.34$ [$+1.83$, $+8.84$] \\
\texttt{Qwen3-8B-Base} & GRPO & $76.52$ & $78.51$ & $+1.98$ [$-1.68$, $+5.64$] \\
\texttt{Qwen3-14B-Base} & \textsc{oppd}, $\lambda=0$ & $80.18$ & $\mathbf{83.69}$ & $+3.51$ [$+0.30$, $+6.71$] \\
\texttt{Llama-3.1-8B-Instruct} & \textsc{oppd}, $\lambda=1$ & $63.41$ & $\mathbf{68.14}$ & $+4.73$ [$+0.76$, $+8.69$] \\
\texttt{Qwen2.5-Math-7B} & \textsc{oppd}, $\lambda=0$ & $42.53$ & $46.49$ & $+3.96$ [$-0.30$, $+8.23$] \\
\texttt{Qwen2.5-Math-7B} & \textsc{oppd}, $\lambda=1$ & $42.53$ & $\mathbf{47.87}$ & $+5.34$ [$+1.07$, $+9.76$] \\
\bottomrule
\end{tabular}
\end{table}

\textsc{oppd} raises HumanEval pass@1 on all four models, by $3.51$ to $5.34$ points
(Table~\ref{tab:humaneval}). The $95\%$ interval excludes zero in every case except
\texttt{Qwen2.5-Math-7B} at $\lambda=0$, the setting of Table~\ref{tab:main}. The
GRPO checkpoint gains $1.98$ points, with an interval that includes zero.

\section{Related work}

\paragraph{Power sampling and sequential Monte Carlo.} \citet{karan2026reasoning}
introduce Metropolis--Hastings sampling from the power distribution \eqref{eq:power}
and show that it matches reinforcement-learning post-training on reasoning
benchmarks. \citet{powersmc2026} replace the serial Markov chain with the
batch-parallel sampler we use, and \citet{ji2026scalable} approximate the
next-token distribution of $\pi_\alpha$ with sampled continuations. All three
methods run at inference time. \citet{tomihari2026power} show that the power
distribution is the closed-form optimum of KL-regularized reinforcement learning when
the reward is the model's own sequence log-probability, and derive power
self-distillation, which trains a model on samples a power sampler produced and
reaches accuracy at or above power sampling at much lower inference cost. Moving the
cost of power sampling into training is therefore established. \textsc{oppd} differs
in where the training answers come from and in what is optimized: the student is the
sampler's proposal distribution, so the answers are its own and one teacher pass per
block serves both the sampler and the update, and the objective is the R\'enyi family
\eqref{eq:renyidiv} rather than supervised learning on stored samples. \textsc{oppd} trains
on-policy, the setting for which on-policy distillation was introduced
\citep{agarwal2024gkd,gu2024minillm}. SMC has also been used to steer language models toward constraints \citep{lew2023sequential} and for
inference with learned twist functions \citep{zhao2024twisted}, and
\citet{hu2024amortizing} learn to sample intractable posteriors with GFlowNet
objectives. We do not learn a twist function (Section~\ref{sec:smc}).

\paragraph{Distillation and post-training.} Sequence-level knowledge distillation
trains a student on answers generated by the teacher
\citep{kim2016sequence,hinton2015distilling}, and on-policy variants train on
answers the student generates and the teacher scores. Methods that use
verified rewards, such as GRPO \citep{shao2024deepseekmath}, need to know whether
each answer is correct. Methods that reward a model's own confidence instead include
entropy minimization \citep{agarwal2025entropy}, self-certainty
\citep{zhao2025intuitor}, negative token entropy \citep{prabhudesai2025rent} and
majority-vote rewards \citep{zuo2025ttrl}, and \citet{huang2025sharpening} analyze
self-improvement as sharpening against the model's own judgments. With the proposal
of Theorem~\ref{thm:opt} and $\beta=1$, \eqref{eq:renyi} makes our training weight a
softmax over the negative R\'enyi entropy accumulated along each answer, a confidence
weighting whose scores come from a frozen teacher and are used inside a sampler. \textsc{oppd} needs only the questions, so it applies where
those methods cannot be run, and it can also be applied to a model those methods
have already trained. \citet{yue2025rl} argue that such methods mostly re-weight
answers the model could already produce, as power sampling does.

\section{Conclusion}

\textsc{oppd} trains a model toward the sequence-level power distribution of a
teacher, so that one generation replaces the sampler. The trained model scores above
published power sampling with $64$ candidates, matches GRPO trained from the same
checkpoint and budget without reading reference answers, and improves a model that
verified rewards have already trained. One loss coefficient controls the sharpening.

\bibliography{refs}
\bibliographystyle{iclr2027_conference}

\appendix

\section{The two-token example}
\label{app:toy}

Section~\ref{sec:whytemp} states that sharpening complete answers differs from
sharpening tokens. The following example shows the difference on a small case.

Consider a vocabulary of two tokens and answers of length two, with $p(a)=0.6$,
$p(a\mid a)=0.9$ and $p(a \mid b)=p(b\mid b)=0.5$. Multiplying gives
$p(aa)=0.54$, $p(ab)=0.06$ and $p(ba)=p(bb)=0.20$. Setting $\alpha=2$:

\begin{center}\small
\begin{tabular}{lccccc}
\toprule
 & $aa$ & $ab$ & $ba$ & $bb$ & total on answers starting with $b$ \\
\midrule
power distribution, $\alpha=2$ & $0.777$ & $0.010$ & $0.107$ & $0.107$ & $0.213$ \\
temperature $\tau=1/2$ & $0.684$ & $0.008$ & $0.154$ & $0.154$ & $0.308$ \\
\bottomrule
\end{tabular}
\end{center}

The power distribution assigns less probability to answers beginning with $b$,
$0.213$ against $0.308$. It does so because every continuation of $b$ is
uncertain, so no answer beginning with $b$ can reach a high total probability.
Temperature cannot represent this, because at the first token it compares $0.6$
against $0.4$ with no information about the answers those two tokens lead to. The power
distribution accounts for how probable the continuations of each prefix are, and
temperature does not. This is the information that \textsc{oppd} transfers into the
model parameters.

\section{Extended experimental results}
\label{app:ext}

This appendix gives in full the results summarized in Section~\ref{sec:exp}.

\subsection{Running the sampler on the trained model}
\label{sec:amort}

\begin{figure}[t]
\centering
\includegraphics[width=0.55\textwidth]{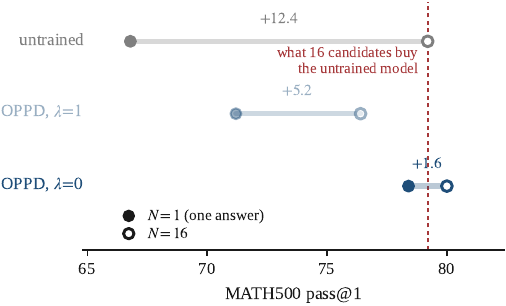}
\caption{Accuracy against inference cost on MATH500, with $n{=}250$ problems.
Sixteen candidates add $12.4$ points for the untrained model. \textsc{oppd} obtains
$11.6$ of these points with a single generation, and running the sampler on the
trained model adds only $1.6$ more. Both markers on each row are measured values, and
no interpolation between them is implied. Exact values in Table~\ref{tab:amort}.}
\label{fig:amortise}
\end{figure}

On the untrained model, sixteen candidates add $12.4$ points. With a single
generation, \textsc{oppd} obtains $11.6$ of these $12.4$ points ($94\%$), and
running the sampler on the trained model adds only $1.6$ more
(Figure~\ref{fig:amortise}). Figure~\ref{fig:method}c shows the same comparison,
between a sampler run on every query before training and one ordinary generation
after it. The gain from candidates falls from $12.4$ to $5.2$ to $1.6$ points as
training places more weight on the sequence-level term (untrained, $\lambda=1$,
$\lambda=0$). The $N{=}1$ value of $78.4$ here and the $78.50$ of
Table~\ref{tab:main} are both the MATH500 accuracy of the same trained model
generating one answer at temperature $1/\alpha$, measured under two protocols: $250$
problems with one sample, and $200$ problems with four. The two values agree to
within $0.1$ points, which checks the two protocols against each other.

\subsection{Training seeds}
\label{sec:seeds}

\begin{table}[t]
\caption{\textsc{oppd}, $\lambda=0$, self-distilled on \texttt{Qwen2.5-Math-7B} with
three training seeds, $300$ steps each. Seed $0$ is the model of
Table~\ref{tab:main}.}
\label{tab:seeds}
\centering\small
\begin{tabular}{lcccc}
\toprule
 & \multicolumn{2}{c}{MATH500} & \multicolumn{2}{c}{GSM8K} \\
\cmidrule(lr){2-3}\cmidrule(lr){4-5}
 & pass@1 & best-of-4 & pass@1 & best-of-4 \\
\midrule
\multicolumn{5}{l}{\emph{sampling at temperature $1$}}\\
seed $0$ & $73.25$ & $88.5$ & $86.50$ & $95.5$ \\
seed $1$ & $71.50$ & $88.5$ & $85.88$ & $95.5$ \\
seed $2$ & $74.63$ & $89.5$ & $89.50$ & $95.5$ \\
mean & $73.13$ & $88.8$ & $87.29$ & $95.5$ \\
\midrule
\multicolumn{5}{l}{\emph{sampling at temperature $1/\alpha$}}\\
seed $0$ & $78.50$ & $86.0$ & $91.13$ & $95.5$ \\
seed $1$ & $79.50$ & $86.0$ & $92.63$ & $94.5$ \\
seed $2$ & $81.25$ & $88.0$ & $91.88$ & $95.0$ \\
mean & $\mathbf{79.75}$ & $86.7$ & $\mathbf{91.88}$ & $95.0$ \\
\bottomrule
\end{tabular}
\end{table}

We trained the configuration of Table~\ref{tab:main} with two further random seeds
under the settings of Section~\ref{sec:exp}. Sampled at temperature $1/\alpha$, the
three runs score $78.50$ to $81.25$
on MATH500 and $91.13$ to $92.63$ on GSM8K (Table~\ref{tab:seeds}). Every run exceeds
published power sampling with $64$ candidates, by $2.4$ to $5.2$ points on MATH500
and $3.5$ to $5.0$ on GSM8K, and every run matches or exceeds published GRPO on
MATH500. At temperature $1$ the gain over the untrained model is $21.3$ to $24.4$ points on MATH500 and $26.6$ to
$30.3$ on GSM8K.

\subsection{Divergence and entropy measurements}
\label{sec:mech}

\begin{table}[t]
\caption{Left: $\mathrm{ESS}/N$ with each model as the proposal distribution, and
the divergence $\hat D_2$ to the power distribution that it implies through
\eqref{eq:exchange}. Right: the untrained model's token entropy along the answers
each trained model writes (own), along its own answers (base), and the ratio of the
two. The baseline is ordinary on-policy distillation.}
\label{tab:mech}
\centering\small
\begin{tabular}{lcc@{\hskip 2em}lcccc}
\toprule
proposal distribution & $\mathrm{ESS}/N$ & $\hat D_2$ & & own & base & ratio & MATH@1 \\
\midrule
untrained & $0.561$ & $0.579$ & baseline & $0.138$ & $0.292$ & $0.47$ & $62.25$ \\
\textsc{oppd}, $\lambda=1$ & $0.602$ & $0.508$ & \textsc{oppd}, $\lambda=1$ & $0.127$ & $0.273$ & $0.47$ & $64.00$ \\
\textsc{oppd}, $\lambda=0$ & $\mathbf{0.635}$ & $\mathbf{0.454}$ & \textsc{oppd}, $\lambda=0$ & $0.096$ & $0.296$ & $\mathbf{0.32}$ & $\mathbf{73.25}$ \\
\bottomrule
\end{tabular}
\end{table}

Training should reduce $D_\beta(\pi_\alpha \Vert \ps)$, and identity
\eqref{eq:exchange} lets us check this independently on held-out problems by
reading the ESS when each model is the proposal distribution. We write
$\hat D_2 = -\log(\mathrm{ESS}/N)$ for the divergence that this identity implies,
which is \eqref{eq:renyidiv} at $\beta = 2$ between $\pi_\alpha$ and the model used
as the proposal. The measured divergence falls by $21.6\%$, and it falls more at
$\lambda=0$ than at $\lambda=1$ (Table~\ref{tab:mech}, left).

The first row of this table needs explanation. Its proposal distribution is the
teacher at temperature $1/\alpha$, which Theorem~\ref{thm:opt} identifies as the
best proposal among those that depend only on the prefix, and yet both trained models
do better. This does not contradict the theorem, which fixes a prefix and minimizes
the variance of the next incremental importance weight. The variation that remains,
described by \eqref{eq:renyi}, comes from differences between the prefixes that
different candidates reach. A trained model cannot improve on $q^\star$ at a fixed
prefix, but it can write answers whose prefixes give more uniform importance weights.
The right half of Table~\ref{tab:mech} measures this effect. Sharpening complete
answers favors answers of high total probability, which by \eqref{eq:renyi} are
answers along which the teacher is confident throughout. Along the answers written by
the model trained at $\lambda=0$, the untrained model's token entropy is $0.32$ times
its entropy along its own answers, against $0.47$ for ordinary on-policy
distillation. Of the three models, the most accurate has the lowest ratio.

\subsection{The coefficient $\lambda$ controls the absorbed exponent}
\label{sec:dose}

The absorbed exponent decreases as $\lambda$ increases, across all five settings
(Figure~\ref{fig:mechanism}a). With the sequence-level term alone it rises by $1.02$
above one, to about half of $\alpha$, against a rise of $0.14$ for ordinary
on-policy distillation. The interquartile ranges of the two extreme settings do not
overlap. The relationship is not linear: removing the anchor entirely moves $\aeff$
from $1.28$ to $2.02$, whereas moving $\lambda$ from $0.5$ to $2$ moves it only from
$1.28$ to $1.19$. Even a small coefficient on the mass-covering anchor suppresses most
of the available sharpening, which is consistent with its forward-KL form.

\paragraph{The change appears on the answers the student writes.} The estimator
can be run on two sets of contexts. One set comes from answers written by the
initialization, and the other from answers written by the trained model itself. The
two estimates agree for ordinary on-policy distillation ($1.105$ and $1.144$) and
differ for \textsc{oppd} ($1.207$ and $2.019$) (Table~\ref{tab:dose}, right, and
Figure~\ref{fig:mechanism}b). Measuring on a model's own answers therefore does not
raise the estimate in general, only for the model trained with the sequence-level
term. On-policy training predicts this asymmetry. The training signal comes only from
answers that the student generates, so the parameters should change mainly along
those answers, and a method trained on text written by the teacher would not produce
this pattern.

\begin{figure}[t]
\centering
\includegraphics[width=0.8\textwidth]{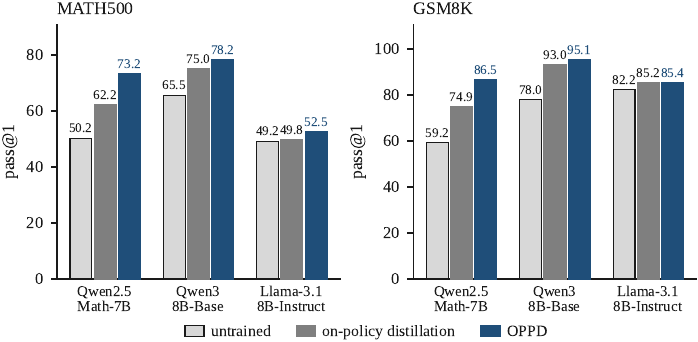}
\caption{Replication across three models, each sampled with its default generation
configuration: temperature $1$, and for \texttt{Llama-3.1-8B-Instruct} temperature
$0.6$ with nucleus (top-$p$) sampling at $p=0.9$. \textsc{oppd}
improves on both the untrained model and ordinary on-policy distillation on every
model and both benchmarks. Exact values in Table~\ref{tab:repl}; best-of-4 and
the temperature-$1/\alpha$ rows are in Appendix~\ref{app:fullacc}.}
\label{fig:replication}
\end{figure}

\subsection{Replication on two further models}
\label{sec:replication}

On \texttt{Qwen3-8B-Base}, \textsc{oppd} gains $12.75$ and $17.13$ points over the
untrained model, which are $4.3$ and $5.8$ standard errors, and best-of-4 rises on
both benchmarks. The divergence measurement gives the same result for every number of
candidates we tested: $\hat D_2$ falls from $0.494$ to $0.362$ at $N{=}4$, from
$0.712$ to $0.536$ at $N{=}16$, and from $0.851$ to $0.744$ at $N{=}64$. These are
reductions of $26.6\%$, $24.7\%$ and $12.6\%$, consistent with the $21.6\%$
measured on \texttt{Qwen2.5-Math-7B}. Values of $\hat D_2$ are not comparable
across numbers of candidates, because more candidates explore more of the
distribution, so each comparison is made at a fixed $N$.

On \texttt{Llama-3.1-8B-Instruct}, \textsc{oppd} gains $3.25$ and $3.13$ points
over the untrained model, and $2.75$ points on MATH500 over ordinary on-policy
distillation. The two methods differ more in best-of-4 than in pass@1: ordinary
on-policy distillation lowers MATH500 best-of-4 by $4.0$ points relative to the
untrained model, while \textsc{oppd} lowers it by $1.0$.

\subsection{Scaling to a larger model}
\label{sec:scale}

The models above have $7$ to $8$B parameters. To test whether the method still
works at a larger scale, we ran it on \texttt{Qwen3-14B-Base}, whose authors report
it as comparable to \texttt{Qwen2.5-32B-Base} \citep{yang2025qwen3}. All settings
were those of Section~\ref{sec:exp} except the learning rate, which was
$3\times10^{-6}$. We used $\lambda=0$, as the rule of Appendix~\ref{sec:predict}
prescribes, and evaluated both the untrained and the trained model at both decoding
temperatures.

\begin{table}[t]
\caption{\texttt{Qwen3-14B-Base}, $300$ steps at $\lambda=0$, evaluated at both
decoding temperatures. Sampled at temperature $1$, \textsc{oppd} scores above the
untrained model at either temperature in pass@1 on all three benchmarks, and in
best-of-4 on MATH500 and AIME.}
\label{tab:scale}
\centering\small
\begin{tabular}{lcccccc}
\toprule
 & \multicolumn{2}{c}{MATH500} & \multicolumn{2}{c}{GSM8K} & \multicolumn{2}{c}{AIME} \\
\cmidrule(lr){2-3}\cmidrule(lr){4-5}\cmidrule(lr){6-7}
 & pass@1 & best-of-4 & pass@1 & best-of-4 & pass@1 & best-of-4 \\
\midrule
\multicolumn{7}{l}{\emph{sampling at temperature $1$}}\\
untrained & $65.50$ & $86.5$ & $87.62$ & $97.5$ & $8.75$ & $25.0$ \\
\textsc{oppd} & $84.00$ & $\mathbf{93.0}$ & $95.88$ & $97.0$ & $\mathbf{16.25}$ & $\mathbf{28.3}$ \\
\midrule
\multicolumn{7}{l}{\emph{sampling at temperature $1/\alpha$}}\\
untrained & $82.00$ & $89.0$ & $92.38$ & $96.0$ & $12.92$ & $20.0$ \\
\textsc{oppd} & $\mathbf{85.50}$ & $91.0$ & $\mathbf{95.75}$ & $96.5$ & $15.42$ & $25.0$ \\
\bottomrule
\end{tabular}
\end{table}

With both models sampled at temperature $1/\alpha$, \textsc{oppd} gains $3.50$
points on MATH500, $3.37$ on GSM8K and $2.50$ on AIME (Table~\ref{tab:scale}). The
more informative comparison is across the two temperatures. Lowering the temperature
requires no training and raises the untrained model to $82.00$, $92.38$ and $12.92$,
its better result of the two temperatures we evaluate. \textsc{oppd} sampled at
temperature $1$, without any change to decoding, reaches $84.00$, $95.88$ and
$16.25$, above the untrained model's best on every benchmark.

\textsc{oppd} also keeps more of the model's diversity, as measured by best-of-4,
than lowering the temperature does. Lowering the temperature reduces the untrained
model's best-of-4 on two of the three benchmarks, from $97.5$ to $96.0$ on GSM8K and
from $25.0$ to $20.0$ on AIME, a cost of the higher pass@1 that a lower temperature
gives. \textsc{oppd} at temperature $1$ scores $93.0$, $97.0$ and $28.3$ in
best-of-4, above the untrained model at either temperature on MATH500 and AIME. The
gains are largest on MATH500, where \textsc{oppd} adds $18.50$ points at temperature
$1$, against the $12.75$ it adds on \texttt{Qwen3-8B-Base}.

\subsection{A third model family}
\label{sec:deepseek}

\begin{table}[t]
\caption{\texttt{deepseek-math-7b-rl}, $300$ steps at $\lambda=1$, $200$ problems and
four samples per problem. $\Delta$ is the pass@1 of \textsc{oppd} minus that of the
untrained model on the same problems, with a $95\%$ interval from a paired bootstrap over problems
($100{,}000$ resamples).}
\label{tab:deepseek}
\centering\small
\begin{tabular}{llccc}
\toprule
sampling & task & untrained & \textsc{oppd} & $\Delta$ [$95\%$ interval] \\
\midrule
\multirow{2}{*}{temperature $1$}
  & MATH500 & $47.12$ & $\mathbf{51.50}$ & $+4.38$ [$+0.88$, $+7.88$] \\
  & GSM8K   & $85.38$ & $\mathbf{87.75}$ & $+2.38$ [$+0.00$, $+4.75$] \\
\midrule
\multirow{2}{*}{temperature $1/\alpha$}
  & MATH500 & $47.12$ & $49.88$ & $+2.75$ [$-0.38$, $+6.00$] \\
  & GSM8K   & $86.00$ & $86.88$ & $+0.88$ [$-0.75$, $+2.38$] \\
\bottomrule
\end{tabular}
\end{table}

\texttt{deepseek-math-7b-rl} differs from the models above in two ways that matter
for this measurement. It comes from a third model family, and it has already been
trained with verified rewards on mathematics, so reinforcement learning has already
moved its probability toward correct answers.

Before training, lowering its sampling temperature from $1$ to $1/\alpha$ leaves
MATH500 at $47.12$ and moves GSM8K from $85.38$ to $86.00$, gains of $+0.00$ and
$+0.62$ points. The rule of Appendix~\ref{sec:predict} prescribes $\lambda=1$ for a
model that gains this little from a lower temperature, and we fixed $\lambda=1$ from
this measurement before training began.

\textsc{oppd} scores above the untrained model in every row of
Table~\ref{tab:deepseek}. At temperature $1$, MATH500 rises $4.38$ points with a
$95\%$ interval of $[+0.88, +7.88]$, and GSM8K rises $2.38$ points.

This model allows a direct comparison of the two operations of
Section~\ref{sec:whytemp}. Lowering the temperature, the token-level operation,
gives $+0.00$ points on MATH500. Sequence-level sharpening through \textsc{oppd}
gives $4.38$ points on the same benchmark and the same problems. On a model where
token-level sharpening gives no further gain, sequence-level sharpening still raises
accuracy.

\subsection{Choosing $\lambda$ from the untrained model}
\label{sec:predict}

Removing the anchor term costs \texttt{Llama-3.1-8B-Instruct} $4.75$ points of
MATH500 accuracy and $5.5$ points of best-of-4. On \texttt{Qwen2.5-Math-7B} the same
change gains $9.25$ points, so the best coefficient differs between models. A quantity
measured on the untrained model predicts which case applies: how much the model gains
from lowering its sampling temperature, which reflects how much sharpening is still
available.

\begin{center}\small
\begin{tabular}{lcc}
\toprule
model & MATH500 gain from temperature $1/\alpha$ & better $\lambda$ \\
\midrule
\texttt{Qwen2.5-Math-7B} & $+19.9$ & $0$ \\
\texttt{Qwen3-8B-Base} & $+13.0$ & $0$ (only $\lambda$ trained) \\
\texttt{Llama-3.1-8B-Instruct} & $+0.6$ & $1$ \\
\bottomrule
\end{tabular}
\end{center}

A model that gains a lot from a lower temperature still has sharpening available
and tolerates $\lambda=0$, the strongest setting. A model that gains almost nothing is
already sharp, and sharpening it further costs accuracy and best-of-4. On these three
models, a single evaluation of the untrained model identifies the better
$\lambda$.

\subsection{Applying \textsc{oppd} after training with verified rewards}
\label{sec:grpo}

Table~\ref{tab:main} compares our result with published GRPO on the same model.
Reinforcement learning with verified rewards and \textsc{oppd} might move
probability toward the same answers, in which case applying one after the other
would give no further gain. We tested this by taking a checkpoint of
\texttt{Qwen2.5-Math-7B} that had already been trained with GRPO on verified rewards
and training it further with \textsc{oppd}. All other settings were those of
Section~\ref{sec:exp}.

\begin{table}[t]
\caption{\textsc{oppd} applied to a checkpoint that has already been trained with
verified rewards. The second row is that checkpoint. GRPO moves the untrained model
from $50.25$ to $64.00$ on MATH500, and \textsc{oppd} adds a further $9.25$
points.}
\label{tab:grpo}
\centering\small
\begin{tabular}{lcc}
\toprule
 & MATH500 pass@1 & GSM8K pass@1 \\
\midrule
untrained model & $50.25$ & $59.25$ \\
after GRPO with verified rewards & $64.00$ & $71.00$ \\
\midrule
GRPO, then \textsc{oppd}, $\lambda=1$ & $72.13$ & $79.75$ \\
GRPO, then \textsc{oppd}, $\lambda=0$ & $\mathbf{73.25}$ & $\mathbf{79.38}$ \\
\bottomrule
\end{tabular}
\end{table}

Training with verified rewards moves the untrained model from $50.25$ to $64.00$ on
MATH500 and from $59.25$ to $71.00$ on GSM8K. \textsc{oppd} then adds a further
$9.25$ and $8.38$ points, which are $3.1$ and $2.8$ standard errors
(Table~\ref{tab:grpo}). Both settings of $\lambda$ give gains of a similar size,
$8.12$ and $8.75$ points at $\lambda=1$, so this model is not sensitive to the
anchor coefficient in the way that the instruction-tuned model of
Appendix~\ref{sec:predict} was.

The two procedures can therefore be combined: training with verified rewards does
not already provide what \textsc{oppd} adds, although both start from the same
untrained model and improve the same benchmarks. Because the two gains add up,
\textsc{oppd} is useful both where verified rewards are unavailable, as in
Table~\ref{tab:main}, and as a further step after training with them.

\begin{figure}[t]
\centering
\includegraphics[width=0.7\textwidth]{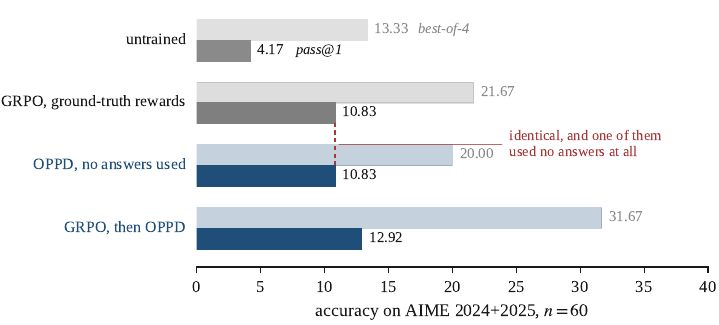}
\caption{AIME 2024 and 2025 pooled, $n{=}60$ problems, four answers each.
\textsc{oppd} reaches the same pass@1 as training with verified rewards without
using reference answers, and applying it after that training exceeds either method
alone. Best-of-4 rises to $31.67$, so the combined model solves problems that the
untrained model does not solve in four attempts. The same evaluation on
\texttt{Qwen3-8B-Base} gives $7.08$ untrained and $14.17$ with \textsc{oppd}. Exact
values in Table~\ref{tab:aime}.}
\label{fig:aime}
\end{figure}

\subsection{Equal-budget comparison with GRPO}
\label{sec:head2head}

\begin{figure}[t]
\centering
\includegraphics[width=0.50\textwidth]{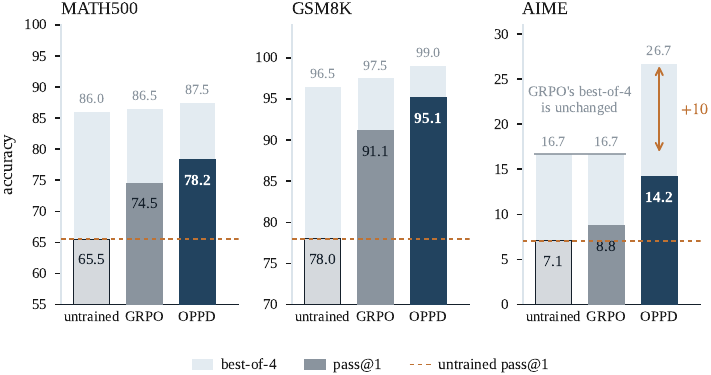}
\caption{\textsc{oppd} and GRPO trained from the same \texttt{Qwen3-8B-Base}
checkpoint with an equal budget. Solid bar: pass@1; pale extension: best-of-4;
dashed line: untrained pass@1. 
}
\label{fig:head2head}
\end{figure}

Appendix~\ref{sec:grpo} applied \textsc{oppd} to a checkpoint that others had
trained with GRPO. This shows that the two can be combined, but it does not say how
large the gain from \textsc{oppd} is beside a method that reads reference answers.
Here we train both ourselves, from the same starting model and with the same budget.
The comparison uses one standard implementation of GRPO at one budget, and we draw
no conclusion from it about the wider family of reinforcement-learning methods that
use verified rewards.

\paragraph{What is held equal.} Both methods run for $300$ optimizer steps. At
each step both draw two prompts and sample sixteen answers of at most $1024$ tokens
per prompt, so each method generates $9{,}600$ answers over the run. GRPO
additionally uses a reference answer for every training question, which
\textsc{oppd} never does.

Our GRPO implementation is the standard one, with a single optimization pass over
each batch of samples. For each prompt it samples sixteen answers, called a group,
and scores each with a binary reward from the grader used for evaluation. The
advantage of each answer is its reward minus the group mean, divided by the group
standard deviation. With a single pass, the model that generated the samples is the
model being updated, so the ratio between the new and the sampling probabilities is
one and the clipping range is inactive.

\textsc{oppd} scores higher on all three benchmarks
(Figure~\ref{fig:head2head}), by $3.75$ points on MATH500, $4.00$ on GSM8K and
$5.42$ on AIME. Each margin is between one and two standard errors, so none is
conclusive on its own, but all three favor the method that uses no reference
answers.

Best-of-4, shown as the pale extensions in Figure~\ref{fig:head2head},
separates the two methods more clearly than pass@1. On MATH500 and GSM8K both
methods raise it, GRPO by $0.5$ and $1.0$ points and \textsc{oppd} by $1.5$ and
$2.5$. On AIME they differ: GRPO leaves best-of-4 at $16.67$, the same as the
untrained model, while \textsc{oppd} raises it to $26.67$. A method that raises
pass@1 without raising best-of-4 selects more reliably among answers the model could
already produce. Raising both means that the model solves problems it did not solve
in four attempts before, and on the hardest of the three benchmarks only
\textsc{oppd} does so. AIME has only $60$ problems, so the pass@1 margins there are
imprecise.

\paragraph{Groups that give GRPO no training signal.} GRPO learns from a group only
when the rewards in it differ. If all sixteen answers earn the same reward, the group
mean equals every reward, all advantages are zero, and the prompt contributes no
gradient. Our logs record how often this happened: $234$ of GRPO's $600$
prompt draws ($39\%$) had uniform rewards and produced no gradient. These draws still cost computation, because the answers had
to be generated before their rewards could be compared.

This fraction is higher than the average reward rate would suggest. Late in
training, the mean reward ranged from $0.06$ to $0.97$ across steps. If every answer
succeeded independently at the average of these rates, fewer than $2\%$ of groups of
sixteen would have uniform rewards. Observing $39\%$ therefore means that difficulty
is not spread evenly across prompts. A given problem tends to be either easy for the
model, so that most of the sixteen answers succeed, or too hard, so that almost none
do. Both cases tend to give uniform rewards and zero advantages, so mainly problems
of intermediate difficulty provide a learning signal.

\textsc{oppd} does not have this problem, because its training weights come from the
teacher's probabilities rather than from a correctness check. Sixteen answers that
are all incorrect still differ in the total probability the teacher assigns them, so
their training weights still differ. The measured ESS stayed between $0.39N$ and
$0.64N$ throughout training, so no group's weight concentrated on a single candidate,
and every prompt drawn produced a gradient.

The two methods also place different requirements on the training data. GRPO needs a
reference answer that a grader can check, and of the $12{,}000$
\texttt{OpenR1-Math-220k} items we examined, $8{,}236$ had one. The rest give their
answer in a form that the grader cannot match to a single expression, such as a pair
of values or a multiple-choice letter. \textsc{oppd} uses only the question, so all
$12{,}000$ items were available to it.

\subsection{Results on AIME}
\label{sec:aime}

MATH500 and GSM8K are solved often enough by the untrained model that a large
part of each score is already available before training. We therefore also
evaluate on AIME, a competition benchmark on which a $7$B model rarely succeeds.
We pool the $2024$ and $2025$ problem sets, giving $60$ problems, and generate
four answers for each. All rows use the same $3500$-token generation limit, which
keeps the prompt and the answer inside the model's context window.

The untrained model solves $4.17\%$ of AIME problems on a single attempt. Training
it with GRPO on verified rewards raises that to $10.83\%$, and \textsc{oppd}, which
uses no reference answers, reaches the same $10.83\%$ (Figure~\ref{fig:aime}). The
two methods reach this number with different requirements: GRPO needs to know
whether every training answer was correct, while \textsc{oppd} needs only the
questions.

Applying \textsc{oppd} after GRPO exceeds either method alone, reaching $12.92\%$.
Best-of-4 separates the three more clearly, at $21.67$ for GRPO, $20.00$ for
\textsc{oppd}, and $31.67$ for the two together. The combined model solves
$31.67\%$ of these problems within four attempts, where the untrained model solves
$13.33\%$, so it solves problems that the untrained model does not solve in four
attempts. Selecting among answers the untrained model already produces cannot give
such a gain.

\paragraph{The AIME result holds on a second model.} We repeated the evaluation on
\texttt{Qwen3-8B-Base}, a stronger and more recent starting point than
\texttt{Qwen2.5-Math-7B}. The untrained model solves $7.08\%$ of AIME problems on a
single attempt, and \textsc{oppd} raises that to $14.17\%$, with best-of-4 going
from $16.67$ to $26.67$. Single-answer accuracy at least doubles on both models, and
$14.17\%$ is the highest AIME result among our $7$B and $8$B models, so the pattern
does not depend on starting from a model that rarely solves these problems.

The gain also transfers to problems harder than those used in training. The training
prompts are GSM8K questions, which are grade-school word problems, while the gain
appears on competition problems that the model solves only occasionally.

\subsection{Training on a different prompt source}
\label{sec:openr1}

Because the method uses no reference answers, the training questions can come from
any source. We repeated the training on \texttt{Llama-3.1-8B-Instruct} using $4000$
problem statements from \texttt{OpenR1-Math-220k} \citep{openr1}. A corpus
assembled from competition sources may contain problems that also appear in the
benchmark. We therefore removed every training prompt whose normalized form
matched an evaluation problem, so no benchmark question appeared during training.

Training on this corpus gives results within one standard error of training on
GSM8K questions: MATH500 is $51.13$ against $52.50$ and GSM8K is $85.88$ against
$85.38$, differences of $1.4$ and $0.5$ points against a standard error of $2.9$.
Of the configurations trained on this model, this one also has the highest
best-of-4 (tied on GSM8K): $69.0$ against the untrained model's $69.5$ on MATH500,
and $94.0$ against $92.5$ on GSM8K, while pass@1 rises on both benchmarks. One training run per model, on questions from any
source, is therefore sufficient. The method needs a supply of questions, not
questions matched to the evaluation.

\subsection{Training against the student itself or a larger teacher}
\label{sec:selfteacher}

Our main results use the student as its own teacher. Section~\ref{sec:exp} chose
self-distillation for a measurement reason: $\aeff(\pb \to \pt)$ is then exactly
$1.000$, so no measured sharpening can come from copying a stronger model. The choice
also follows from the purpose of the method:
\textsc{oppd} redistributes probability that the student already assigns, so the
distribution that should be sharpened is the student's own.

Standard distillation practice suggests the opposite, namely that a larger
teacher should do better. We tested this on
\texttt{Qwen2.5-Math-1.5B}, training it twice for $300$ steps with $\lambda=1$ and
otherwise identical settings, once against \texttt{Qwen2.5-Math-7B} as teacher and
once against itself.

\begin{table}[t]
\caption{\texttt{Qwen2.5-Math-1.5B} trained twice from the same starting point,
$300$ steps each, differing only in which model scores the candidates. Scoring
against the student's own distribution matches or exceeds scoring against a
$4\times$ larger teacher.}
\label{tab:selfteacher}
\centering\small
\begin{tabular}{lcccc}
\toprule
 & \multicolumn{2}{c}{MATH500} & \multicolumn{2}{c}{GSM8K} \\
\cmidrule(lr){2-3}\cmidrule(lr){4-5}
teacher & pass@1 & best-of-4 & pass@1 & best-of-4 \\
\midrule
none (untrained) & $49.75$ & $77.0$ & $57.88$ & $88.5$ \\
\texttt{Qwen2.5-Math-7B} & $58.25$ & $\mathbf{82.5}$ & $67.63$ & $89.5$ \\
the student itself & $\mathbf{59.88}$ & $\mathbf{82.5}$ & $\mathbf{73.25}$ & $\mathbf{91.0}$ \\
\bottomrule
\end{tabular}
\end{table}

Both settings improve on the untrained model, by $8.50$ and $9.75$ points with
the larger teacher and by $10.13$ and $15.37$ points with the student itself
(Table~\ref{tab:selfteacher}). Self-distillation is ahead by $5.62$ points on
GSM8K, which is $1.5$ standard errors, and by $1.63$ on MATH500, which is within
noise. The two benchmarks together show that scoring against the student's own
distribution loses nothing relative to a teacher four times its size, and on one
benchmark it does better.

This agrees with how the method works. \textsc{oppd} uses the scoring model's ranking
of complete answers, and that ranking indicates where to move probability only if
the probability belongs to the model being trained. A larger teacher defines a power
distribution that is not the student's own, so part of the gradient goes into
closing the gap between the two models rather than into sharpening the student.
Self-distillation removes this competing objective, so the method does not need a
larger model to learn from.

Both runs used $\lambda=1$, chosen for the larger-teacher setting. We did not test
whether $\lambda=0$ changes their order, or how the comparison changes as the gap
between teacher and student grows.

\subsection{Where the sampler's gain comes from}
\label{sec:selection}

Grading every candidate separates two things that the sampler does: it discards
candidates while answers are being written, and at the end it re-ranks the finished
answers by importance weight. If the importance weights carried information about
correctness, the total normalized weight on correct candidates would exceed the
fraction of candidates that are correct. Across eleven measurements spanning three
models and three numbers of candidates, the two agree to within $0.5$ points in
every case. This includes self-distillation, where teacher and student share an
architecture. Choosing the answer by weighted draw, by highest weight, or by majority
vote gives the same result throughout (Appendix~\ref{app:selection}).

The teacher's probabilities therefore carry no useful information for ranking
finished answers. All of the sampler's gain comes from discarding candidates while
the answers are still being written and their continuations still differ. Once the
answers are complete, the teacher's probabilities no longer separate correct from
incorrect candidates. This has a consequence for a family of methods: re-ranking
complete answers by model probability should not be expected to help, whereas
intervening during generation should.

\section{Proofs}
\label{app:proofs}

\begin{proof}[Proof of Theorem~\ref{thm:opt}]
Fix a prefix and write $p(v) := \pt(v\mid x, y_{<t})$. For a proposal $q$ the
incremental importance weight is $\omega(v) = p(v)^{\alpha}/q(v)$, and its mean is proposal-independent:
$\mathbb{E}_q[\omega] = \sum_v q(v) p(v)^{\alpha}/q(v) = Z_t$. Minimizing the
variance is therefore equivalent to minimizing
$\mathbb{E}_q[\omega^2] = \sum_v p(v)^{2\alpha}/q(v)$ subject to
$\sum_v q(v) = 1$. With Lagrangian
$\sum_v p(v)^{2\alpha}/q(v) + \mu(\sum_v q(v) - 1)$, stationarity gives
$-p(v)^{2\alpha}/q(v)^2 + \mu = 0$, hence $q(v) = p(v)^{\alpha}/\sqrt{\mu}$, and
normalization fixes $\sqrt{\mu} = Z_t$. Substituting, $\omega(v) \equiv Z_t$ for
all $v$, so the conditional variance is zero and the minimizer is unique.
\end{proof}

\begin{proof}[Proof of Theorem~\ref{thm:master}]
Let $S = \sum_y \pi_\alpha(y)^{\beta}\ps(y)^{1-\beta}$, so
$D_\beta = \tfrac{1}{\beta-1}\log S$. Differentiating and using
$\nabla_\phi \ps = \ps \nabla_\phi \log \ps$,
\[
\nabla_\phi S = \sum_y \pi_\alpha^{\beta}(1-\beta)\ps^{-\beta}\nabla_\phi \ps
= (1-\beta)\sum_y \pi_\alpha^{\beta}\ps^{1-\beta}\nabla_\phi \log\ps .
\]
Then $\nabla_\phi D_\beta = \tfrac{1}{\beta-1}\nabla_\phi S / S
= -\sum_y \tfrac{\pi_\alpha^{\beta}\ps^{1-\beta}}{S}\nabla_\phi\log\ps
= -\mathbb{E}_{\bar\pi^{(\beta)}}[\nabla_\phi \log \ps]$, since
$\tfrac{1-\beta}{\beta-1} = -1$ for every $\beta \ne 1$; the $\beta \to 1$ case
follows by continuity and gives the gradient of $\mathrm{KL}(\pi_\alpha \Vert \ps)$.
The
normalizer $Z_\alpha$ does not depend on $\phi$ and drops out.
\end{proof}

\section{Tables corresponding to the figures}
\label{app:figtables}

Figures~\ref{fig:amortise}, \ref{fig:mechanism}, \ref{fig:replication} and
\ref{fig:aime} present these measurements graphically. The exact values follow.

\begin{table}[t]
\caption{Each model used as the proposal distribution of the sampler, on MATH500
with $n{=}250$ and every candidate graded. Generating a single answer,
\textsc{oppd} at $\lambda=0$ comes within $0.8$ points of the untrained model with
sixteen candidates.}
\label{tab:amort}
\centering\small
\begin{tabular}{lccccc}
\toprule
proposal distribution & $N{=}1$ & $N{=}16$ & gain from candidates & best candidate & $\mathrm{ESS}/N$ \\
\midrule
untrained & $66.8$ & $79.2$ & $+12.4$ & $82.4$ & $0.561$ \\
\textsc{oppd}, $\lambda=1$ & $71.2$ & $76.4$ & $+5.2$ & $79.6$ & $0.602$ \\
\textsc{oppd}, $\lambda=0$ & $\mathbf{78.4}$ & $80.0$ & $\mathbf{+1.6}$ & $81.6$ & $0.635$ \\
\bottomrule
\end{tabular}
\end{table}

\begin{table}[t]
\caption{Left: the absorbed exponent decreases as $\lambda$ increases, across five
settings, with \texttt{Qwen2.5-Math-1.5B} as the student, $\alpha=4$, and contexts
taken from the answers each model writes itself. Right: the same estimator on two
sets of contexts. Only the model trained with the sequence-level term measures higher
on its own answers. The baseline is ordinary on-policy distillation.}
\label{tab:dose}
\centering\small
\begin{tabular}{lcc@{\hskip 2.5em}lccc}
\toprule
 & $\aeff(\pb\!\to\!\ps)$ & IQR & & contexts from base & own answers & ratio \\
\midrule
\textsc{oppd}, $\lambda=0$ & $\mathbf{2.019}$ & $[1.713, 2.356]$ & baseline & $1.105$ & $1.144$ & $1.04$ \\
\textsc{oppd}, $\lambda=0.5$ & $1.283$ & $[1.204, 1.372]$ & \textsc{oppd}, $\lambda=0$ & $1.207$ & $\mathbf{2.019}$ & $\mathbf{1.67}$ \\
\textsc{oppd}, $\lambda=1$ & $1.243$ & $[1.172, 1.319]$ & & & & \\
\textsc{oppd}, $\lambda=2$ & $1.193$ & $[1.144, 1.286]$ & & & & \\
baseline & $1.144$ & $[1.110, 1.178]$ & & & & \\
\bottomrule
\end{tabular}
\end{table}

\begin{table}[t]
\caption{Replication at each model's default generation configuration. Left:
\texttt{Qwen3-8B-Base}, temperature $1$. Right: \texttt{Llama-3.1-8B-Instruct},
temperature $0.6$ with nucleus sampling at $p=0.9$. On
\texttt{Llama-3.1-8B-Instruct}, lowering the sampling temperature to $1/\alpha$
gains only $0.6$ points on MATH500, so gains measured here come from training rather
than from decoding. The baseline is ordinary on-policy distillation.}
\label{tab:repl}
\centering\small
\begin{tabular}{lcccc@{\hskip 1.5em}lcccc}
\toprule
\multicolumn{5}{c}{\texttt{Qwen3-8B-Base}} & \multicolumn{5}{c}{\texttt{Llama-3.1-8B-Instruct}} \\
\cmidrule(lr){1-5}\cmidrule(lr){6-10}
 & \multicolumn{2}{c}{MATH} & \multicolumn{2}{c}{GSM} & & \multicolumn{2}{c}{MATH} & \multicolumn{2}{c}{GSM} \\
 & @1 & @4 & @1 & @4 & & @1 & @4 & @1 & @4 \\
\midrule
untrained & $65.50$ & $86.0$ & $78.00$ & $96.5$ & untrained & $49.25$ & $69.5$ & $82.25$ & $92.5$ \\
baseline & $75.00$ & $86.5$ & $93.00$ & $97.5$ & baseline & $49.75$ & $65.5$ & $85.20$ & $94.0$ \\
\textsc{oppd}, $\lambda{=}0$ & $\mathbf{78.25}$ & $\mathbf{87.5}$ & $\mathbf{95.13}$ & $\mathbf{99.0}$ & \textsc{oppd}, $\lambda{=}0$ & $47.75$ & $63.0$ & $85.75$ & $91.5$ \\
 & & & & & \textsc{oppd}, $\lambda{=}1$ & $\mathbf{52.50}$ & $68.5$ & $85.38$ & $93.0$ \\
 & & & & & \;\;\emph{OpenR1 prompts} & $51.13$ & $\mathbf{69.0}$ & $\mathbf{85.88}$ & $\mathbf{94.0}$ \\
\bottomrule
\end{tabular}
\end{table}

\begin{table}[t]
\caption{AIME 2024 and 2025 pooled, $n{=}60$ problems, four answers each.
\textsc{oppd} reaches the same pass@1 as training with verified rewards without
using reference answers. Applying it after that training exceeds either method
alone, and best-of-4 rises to $31.67$, so the combined model solves problems that the
untrained model does not solve in four attempts.}
\label{tab:aime}
\centering\small
\begin{tabular}{lcc}
\toprule
 & pass@1 & best-of-4 \\
\midrule
untrained \texttt{Qwen2.5-Math-7B} & $4.17$ & $13.33$ \\
after GRPO with verified rewards & $10.83$ & $21.67$ \\
\textsc{oppd}, no reference answers & $10.83$ & $20.00$ \\
GRPO, then \textsc{oppd}, $\lambda=0$ & $12.92$ & $\mathbf{31.67}$ \\
\midrule
untrained \texttt{Qwen3-8B-Base} & $7.08$ & $16.67$ \\
\texttt{Qwen3-8B-Base} with \textsc{oppd} & $\mathbf{14.17}$ & $26.67$ \\
\bottomrule
\end{tabular}
\end{table}

\begin{table}[t]
\caption{\textsc{oppd} and GRPO trained from the same \texttt{Qwen3-8B-Base}
checkpoint for $300$ steps each, generating $9{,}600$ answers each, then evaluated
with the same code. GRPO uses a reference answer for every training question, and
\textsc{oppd} uses none. The two runs used the same \texttt{GSM8K} prompt set.}
\label{tab:head2head}
\centering\small
\begin{tabular}{lcccccc}
\toprule
 & \multicolumn{2}{c}{MATH500} & \multicolumn{2}{c}{GSM8K} & \multicolumn{2}{c}{AIME} \\
\cmidrule(lr){2-3}\cmidrule(lr){4-5}\cmidrule(lr){6-7}
 & pass@1 & best-of-4 & pass@1 & best-of-4 & pass@1 & best-of-4 \\
\midrule
untrained & $65.50$ & $86.0$ & $78.00$ & $96.5$ & $7.08$ & $16.67$ \\
GRPO, reference answers & $74.50$ & $86.5$ & $91.13$ & $97.5$ & $8.75$ & $16.67$ \\
\textsc{oppd}, no reference answers
  & $\mathbf{78.25}$ & $\mathbf{87.5}$ & $\mathbf{95.13}$ & $\mathbf{99.0}$
  & $\mathbf{14.17}$ & $\mathbf{26.67}$ \\
\bottomrule
\end{tabular}
\end{table}

\section{Notation and definitions}
\label{app:notation}

This section defines the terms used in the paper, for readers who are not
familiar with all of them.

\paragraph{Prompts, answers, probabilities.} $x$ is the prompt, $y=(y_1,\dots,y_T)$
an answer of $T$ tokens, and $y_{<t}$ the prefix before position $t$. A token is a
short piece of text. A language model factors $p(y\mid x) = \prod_t p(y_t \mid x,
y_{<t})$, and $p(\cdot \mid x, y_{<t})$, as a function of the next token, is a
\emph{next-token distribution}.

\paragraph{Temperature.} Sampling at temperature $\tau$ replaces each next-token
distribution $p(\cdot)$ by one proportional to $p(\cdot)^{1/\tau}$. Temperature
$\tau=1$ is ordinary sampling, and $\tau<1$ makes likely tokens more likely, which
is token-level \emph{sharpening}. The power distribution sharpens complete answers
instead (Section~\ref{sec:whytemp}).

\paragraph{pass@1 and best-of-4.} We draw four independent answers per problem.
\emph{pass@1} is the fraction correct, averaged over problems; it estimates the
chance a single answer is correct. \emph{best-of-4} counts a problem solved if at
least one of the four is correct. A method that raises pass@1 while best-of-4
holds or rises is solving more problems. A method that raises pass@1 while best-of-4
falls is producing the same answer more consistently. The gap between the two
metrics shows how much the model's success depends on which sample is drawn.

\paragraph{Entropy and nats.} Entropy measures how spread out a distribution is,
and it is zero when one outcome is certain. We report it in \emph{nats}, the unit
that natural logarithms give. For the models studied here, a next-token entropy of a
few tenths of a nat means that the model is already confident.

\paragraph{Divergences.} $\mathrm{KL}(P\Vert Q)$ measures how different $P$ is
from $Q$. It is zero if and only if the two agree, and it is not symmetric, so the
two orderings behave differently as objectives. Forward KL, $\mathrm{KL}(P \Vert
Q)$ with $P$ the distribution being approximated, is \emph{mass-covering}: it
heavily penalizes $Q$ for assigning little probability where $P$ assigns some, and
it tolerates $Q$ being broad. Reverse KL, $\mathrm{KL}(Q \Vert P)$, is
\emph{mode-seeking}: it penalizes $Q$ for assigning probability where $P$ assigns
little. The R\'enyi divergence
$D_\beta(P\Vert Q) = \tfrac{1}{\beta-1}\log\sum_y P(y)^{\beta}Q(y)^{1-\beta}$
is a family indexed by $\beta>0$, equal to KL at $\beta=1$; it is zero exactly when
$P = Q$ and positive otherwise. The R\'enyi entropy
is $H_\alpha(p) = \tfrac{1}{1-\alpha}\log\sum_v p(v)^{\alpha}$.

\paragraph{Distillation.} Distillation trains a \emph{student} $\ps$, with
parameters $\phi$, to behave like a \emph{teacher} $\pt$. In \emph{off-policy}
distillation the teacher writes examples and the student reproduces them. In
\emph{on-policy} distillation the student writes answers and the teacher scores
them, so the student learns to continue from its own mistakes.
\emph{Self-distillation} initializes both from the same model, which rules out that
a measured change comes from the student copying a stronger model.

\paragraph{Stop-gradient.} $[\cdot]_{\mathrm{sg}}$ denotes a quantity held fixed
during differentiation.

\paragraph{Uncertainty.} With $200$ problems and four samples each, the paired
standard error on a difference between two methods on the same problems is
$\approx 2.9$ points, written $1\sigma$. Differences of $1\sigma$ or less are not
treated as real. For per-context estimates we report the interquartile range (IQR),
the range of the middle half of the measurements.

\section{Extended background on the sampler}
\label{app:smc}

\paragraph{Importance sampling.} To estimate averages under a distribution
$\pi = \gamma/Z$, where $\gamma$ can be evaluated but $Z$ cannot, draw samples from
a proposal distribution $q$ and weight each sample by $w(y)=\gamma(y)/q(y)$, which
measures how much more $\pi$ favors the sample than $q$ does. The self-normalized
estimate is
$\mathbb{E}_\pi[f] \approx \sum_i w(y^{(i)})f(y^{(i)}) / \sum_i w(y^{(i)})$, and
the division removes $Z$. This estimator is consistent rather than unbiased, with
bias of order $1/N$.

\paragraph{Computing the weights incrementally.} When importance sampling is
applied to complete answers in one step, the weights vary over many orders of
magnitude and one sample dominates. Defining unnormalized distributions over
prefixes, $\gamma_t(y_{1:t}) = \pt(y_{1:t}\mid x)^{\alpha}$, and updating the
weights by \eqref{eq:incw} gives the same total weight $\gamma_T/q$, while letting
the sampler monitor the weights and resample during generation.

\paragraph{Resampling.} When the ESS falls below $\kappa N$ (we use $\kappa=0.5$),
the candidates are resampled with probability proportional to their normalized
importance weights, and all weights are reset to be equal. We use systematic
resampling, which is unbiased and has lower variance than multinomial resampling
\citep{johansen2009tutorial}. Duplicated candidates diverge again because their
next tokens are sampled independently. Any unbiased resampling scheme leaves the
distribution being approximated unchanged and affects only the variance and the
diversity of the candidates.

\paragraph{What the weights select for.} Equation~\eqref{eq:renyi} states that,
under the variance-optimal proposal, a candidate's log importance weight is
$(1-\alpha)$ times the sum of the R\'enyi entropies of the teacher's next-token
distributions along its answer. Since $\alpha>1$ makes $(1-\alpha)$ negative,
candidates whose prefixes the teacher is confident about keep their weight, and
candidates that pass through uncertain positions lose it. No reward model, verifier
or reference answer is used at any point.

\section{Why a weighted token-level KL cannot reach the power distribution}
\label{app:wrongloss}

An alternative objective multiplies a per-token KL divergence to the teacher at
temperature $1/\alpha$ by the training weight of each answer,
$\sum_{i,t} [\bar A^{(i)}]_{\mathrm{sg}}\,
\mathrm{KL}(\pi^{(\alpha)}_{\mathrm{T}}(\cdot\mid y^{(i)}_{<t}) \Vert
\ps(\cdot \mid y^{(i)}_{<t}))$. This objective cannot reach $\pi_\alpha$. A
divergence multiplied by a nonnegative scalar has the same minimizer as the
divergence itself, so for every context $c$ that is visited, the optimal
$\ps(\cdot\mid c)$ is $\pi^{(\alpha)}_{\mathrm{T}}(\cdot \mid c)$ whatever the
weights are. Weights placed outside a divergence change \emph{which contexts
contribute}, but they never change \emph{the distribution matched at each context}.
The attainable optimum is therefore the teacher at temperature $1/\alpha$, which
Section~\ref{sec:whytemp} shows differs from $\pi_\alpha$.

The underlying reason is that the next-token distribution of $\pi_\alpha$ is
proportional to $\pt(v \mid y_{<t})^{\alpha}\,V_t(y_{1:t}v)$, where the twist
function $V_t = \sum_{y_{>t}}\pt(y_{>t}\mid y_{1:t})^{\alpha}$ sums over all
continuations. Setting $V_t \equiv \mathrm{const}$ leaves exactly the teacher at
temperature $1/\alpha$, and a factor missing \emph{inside} the divergence cannot be
supplied by a weight \emph{outside} it. Equation~\eqref{eq:loss} instead places the
weights on a log-likelihood, where they change the distribution being fitted.

\section{The derivation in full}
\label{app:derivation}

This appendix derives the loss from the objective step by step. It repeats material
from Section~\ref{sec:method} in a form that can be checked line by line, and it
contains no new results.

\subsection{What we are minimizing}

Fix a prompt $x$. The student $\ps$, with parameters $\phi$, assigns probability
$\ps(y \mid x)$ to each complete answer $y$. The power distribution of
\eqref{eq:power}, with the teacher $\pt$ in place of $p$, is
\[
\pi_\alpha(y \mid x) \;=\; \frac{\pt(y \mid x)^{\alpha}}{Z_\alpha(x)},
\qquad
Z_\alpha(x) \;=\; \sum_{y'} \pt(y' \mid x)^{\alpha}.
\]
We choose an order $\beta > 0$ and minimize the R\'enyi divergence
\eqref{eq:renyidiv} between the power distribution and the student,
\[
\mathcal{J}(\phi) \;=\; D_\beta\big(\pi_\alpha \,\Vert\, \ps\big)
\;=\; \frac{1}{\beta - 1}\,\log S(\phi),
\qquad
S(\phi) \;=\; \sum_y \pi_\alpha(y)^{\beta}\, \ps(y)^{1-\beta},
\]
where the dependence on $x$ is omitted. Minimizing $\mathcal{J}$ is
equivalent to minimizing $\log S$ when $\beta > 1$ and to maximizing it when
$\beta < 1$; the factor $\tfrac{1}{\beta-1}$ handles the sign, which is why the
gradient below has one form for every $\beta$.

\subsection{The gradient of that objective}
\label{app:d2}

Differentiate $S$ with respect to $\phi$. Only the factor $\ps(y)^{1-\beta}$
depends on $\phi$, and $\tfrac{\partial}{\partial \phi}\ps^{1-\beta}
= (1-\beta)\,\ps^{-\beta}\,\nabla_\phi \ps$, so
\[
\nabla_\phi S \;=\; \sum_y \pi_\alpha(y)^{\beta}\,(1-\beta)\,\ps(y)^{-\beta}\,
\nabla_\phi \ps(y).
\]
Use the identity $\nabla_\phi \ps = \ps \, \nabla_\phi \log \ps$, which holds
wherever $\ps > 0$, to replace $\nabla_\phi \ps(y)$:
\[
\nabla_\phi S \;=\; (1-\beta)\sum_y \pi_\alpha(y)^{\beta}\, \ps(y)^{1-\beta}\,
\nabla_\phi \log \ps(y).
\]
Now differentiate $\mathcal{J} = \tfrac{1}{\beta-1}\log S$:
\[
\nabla_\phi \mathcal{J}
\;=\; \frac{1}{\beta-1}\cdot\frac{\nabla_\phi S}{S}
\;=\; \frac{1-\beta}{\beta-1} \sum_y
\frac{\pi_\alpha(y)^{\beta}\,\ps(y)^{1-\beta}}{S}\, \nabla_\phi \log \ps(y).
\]
The prefactor $\tfrac{1-\beta}{\beta-1}$ equals $-1$ for every $\beta \neq 1$.
Define
\[
\bar\pi^{(\beta)}(y) \;=\; \frac{\pi_\alpha(y)^{\beta}\,\ps(y)^{1-\beta}}{S},
\]
which is nonnegative and sums to one over $y$, so it is a probability
distribution. Then
\begin{equation}
\nabla_\phi D_\beta(\pi_\alpha \Vert \ps)
\;=\; -\sum_y \bar\pi^{(\beta)}(y)\, \nabla_\phi \log \ps(y)
\;=\; -\,\mathbb{E}_{\bar\pi^{(\beta)}}\!\left[\nabla_\phi \log \ps(y)\right],
\label{eq:appgrad}
\end{equation}
which is Theorem~\ref{thm:master}. The case $\beta = 1$ follows by continuity and
gives the gradient of $\mathrm{KL}(\pi_\alpha \Vert \ps)$.

Substituting $\pi_\alpha = \pt^{\alpha}/Z_\alpha$ into the definition of
$\bar\pi^{(\beta)}$ gives
$\bar\pi^{(\beta)} \propto \pt^{\alpha\beta}\, \ps^{1-\beta}$, with the factor
$Z_\alpha^{-\beta}$ absorbed into the normalizer $S$. This is why the intractable
$Z_\alpha$ never appears in the gradient even though it appears in the objective.

\subsection{Why the weights can be treated as constants}
\label{app:d3}

Equation~\eqref{eq:appgrad} contains $\bar\pi^{(\beta)}$, which depends on $\phi$
through $\ps^{1-\beta}$. It might appear that differentiating a loss whose
weights are computed from $\phi$ and then held fixed would miss a term. It does not,
because \eqref{eq:appgrad} is already the complete gradient. The differentiation in
Appendix~\ref{app:d2} was applied to the full expression for $\mathcal{J}$,
including the dependence of $\bar\pi^{(\beta)}$ on $\phi$, and every resulting term
was collected. In the right-hand side, $\bar\pi^{(\beta)}$ appears only as a
weight. Any implementation that evaluates those weights and holds them fixed while
differentiating $\log \ps$ therefore computes \eqref{eq:appgrad} exactly, not an
approximation of it. Appendix~\ref{app:verify} confirms this numerically against
brute-force enumeration.

\subsection{From an expectation to a weighted average}

We cannot evaluate the sum in \eqref{eq:appgrad}, because it runs over every
possible answer, and we cannot draw samples from $\bar\pi^{(\beta)}$, because
computing its normalizer $S$ requires the same sum. Instead, we draw answers from a
proposal distribution $q$ and correct for the mismatch.

Let $y^{(1)}, \dots, y^{(N)}$ be answers drawn from $q$, and suppose $q(y) > 0$
wherever $\bar\pi^{(\beta)}(y) > 0$. For any function $f$,
\[
\mathbb{E}_{\bar\pi^{(\beta)}}[f(y)]
\;=\; \sum_y \bar\pi^{(\beta)}(y)\, f(y)
\;=\; \sum_y q(y)\, \frac{\bar\pi^{(\beta)}(y)}{q(y)}\, f(y)
\;=\; \mathbb{E}_{q}\!\left[\frac{\bar\pi^{(\beta)}(y)}{q(y)}\, f(y)\right],
\]
where the middle step multiplies and divides by $q(y)$, which is legitimate
because $q(y) > 0$ on the support. The ratio $\bar\pi^{(\beta)}(y)/q(y)$ is the
importance weight of $y$ for estimating expectations under $\bar\pi^{(\beta)}$. We
call its self-normalized version the training weight, to keep it distinct from the
sampler's importance weight for $\pi_\alpha$.

Two problems remain: the right-hand side is still an expectation, which we replace
by the sample average over the $N$ answers, and $\bar\pi^{(\beta)}$ is known only
up to the constant $S$, which we cannot compute. The \emph{self-normalized} form
handles both. Write
$\tilde A^{(i)} = \bar\pi^{(\beta)}(y^{(i)})/q(y^{(i)})$ for the unnormalized
ratio, using any convenient constant multiple of $\bar\pi^{(\beta)}$, and let
\[
\bar A^{(i)} \;=\; \frac{\tilde A^{(i)}}{\sum_{j=1}^{N} \tilde A^{(j)}},
\qquad\text{so that}\qquad \sum_{i=1}^{N} \bar A^{(i)} = 1 .
\]
Any constant common to all $\tilde A^{(i)}$ cancels between the numerator and the
denominator, so the unknown $S$ and the unknown $Z_\alpha^{-\beta}$ both drop
out. The self-normalized estimate is
\[
\mathbb{E}_{\bar\pi^{(\beta)}}[f(y)] \;\approx\; \sum_{i=1}^{N} \bar A^{(i)} f\big(y^{(i)}\big),
\]
which is consistent, with bias of order $1/N$ \citep{doucet2001smc}. This is the
sense in which Section~\ref{sec:loss} obtains the training weights from
self-normalized importance sampling: they are the ratios $\bar\pi^{(\beta)}/q$
evaluated at the sampled answers and rescaled to sum to one. Taking $f = \nabla_\phi \log \ps$ and substituting into
\eqref{eq:appgrad},
\begin{equation}
\nabla_\phi D_\beta(\pi_\alpha \Vert \ps)
\;\approx\; -\sum_{i=1}^{N} \bar A^{(i)}\, \nabla_\phi \log \ps\big(y^{(i)}\big).
\label{eq:appmc}
\end{equation}

\subsection{Computing the weights token by token}

Both $\bar\pi^{(\beta)}$ and $q$ factor over tokens, so their ratio does too.
Writing the factorization out,
\[
\tilde A^{(i)}
\;\propto\; \frac{\prod_t \pt\big(y^{(i)}_t \mid x, y^{(i)}_{<t}\big)^{\alpha\beta}
\prod_t \ps\big(y^{(i)}_t \mid x, y^{(i)}_{<t}\big)^{1-\beta}}
{\prod_t q_t\big(y^{(i)}_t \mid x, y^{(i)}_{<t}\big)} ,
\]
and taking logarithms turns each product into a sum:
\[
\log \tilde A^{(i)} = \sum_t \Big[\alpha\beta \log \pt\big(y^{(i)}_t \mid \cdot\big)
+ (1-\beta)\log \ps\big(y^{(i)}_t \mid \cdot\big)
- \log q_t\big(y^{(i)}_t \mid \cdot\big)\Big],
\]
which is \eqref{eq:lossweight}. Working in logarithms is necessary, because each
factor is a probability below one and the product underflows to zero in floating
point after a few hundred tokens. The three quantities in the bracket are the same
three that the sampler evaluates at every token, which is the sense in which the
sampler and the loss share their computation.

\subsection{From the gradient to a scalar loss}

Equation~\eqref{eq:appmc} is a gradient estimate, but automatic differentiation
requires a scalar loss. Consider
\[
\mathcal{L}_{\mathrm{seq}}(\phi)
\;=\; -\sum_{i=1}^{N} \big[\bar A^{(i)}\big]_{\mathrm{sg}}\,
\log \ps\big(y^{(i)} \mid x\big),
\]
where $[\cdot]_{\mathrm{sg}}$ means the enclosed value is treated as a constant by
the differentiation. Differentiating, the constants pass through and
\[
\nabla_\phi \mathcal{L}_{\mathrm{seq}}
\;=\; -\sum_{i=1}^{N} \bar A^{(i)}\, \nabla_\phi \log \ps\big(y^{(i)} \mid x\big),
\]
which is exactly the right-hand side of \eqref{eq:appmc}. A gradient step on
$\mathcal{L}_{\mathrm{seq}}$ is therefore a gradient step on
$D_\beta(\pi_\alpha \Vert \ps)$, up to the error of the estimate.
Appendix~\ref{app:d3} justifies the stop-gradient.

Finally, we expand $\log \ps(y^{(i)} \mid x) = \sum_t \log \ps(y^{(i)}_t \mid x,
y^{(i)}_{<t})$, so that the loss is a sum over tokens. Dividing by
$\bar T = \frac1N\sum_i T_i$ keeps the magnitude of the loss from growing with
answer length, which would otherwise tie the effective learning rate to how long the
answers are. The result is the sequence-level term of \eqref{eq:loss}.

\section{The simplest case of the loss}
\label{app:simple}

Setting $\beta=1$ reduces \eqref{eq:lossweight} to a sum of the log incremental
importance weights of \eqref{eq:incw}. Substituting the proposal $q_t \propto \ps^{\alpha}$ and using
$\log Z^{\phi}_t = (1-\alpha)H_\alpha(\ps)$,
\[
\log \tilde A^{(i)} =
\alpha\Big[\log \pt(y^{(i)}\mid x) - \log \ps(y^{(i)}\mid x)\Big]
- (\alpha-1)\,\mathcal{H}^{(i)},
\qquad
\mathcal{H}^{(i)} = \sum_t H_\alpha\big(\ps(\cdot \mid x, y^{(i)}_{<t})\big).
\]
At initialization in self-distillation the first bracket vanishes and
$\bar A^{(i)} = \mathrm{softmax}_i\!\left(-(\alpha-1)\mathcal{H}^{(i)}\right)$, a
softmax over the negative total R\'enyi entropy along each answer, scaled by
$\alpha-1$. With $\lambda=0$ the loss is then maximum likelihood on the student's
own answers, each weighted by how confident the student was along it. As training
proceeds, the first bracket becomes nonzero and shifts weight toward answers that the
frozen teacher rates more highly than the current student does.

\section{Full experimental configuration}
\label{app:setup}

\paragraph{Models.} \texttt{Qwen2.5-Math-7B} (main, self-distillation),
\texttt{Qwen3-8B-Base}, \texttt{Qwen3-14B-Base}, \texttt{Llama-3.1-8B-Instruct},
\texttt{deepseek-math-7b-rl}, and
\texttt{Qwen2.5-Math-1.5B} as student with \texttt{Qwen2.5-Math-7B} as teacher for
the $\lambda$ sweep. In self-distillation teacher and student are initialized from
the same checkpoint; the teacher is then frozen.

\paragraph{Sampler.} $\alpha=4$, increased linearly from $1$ over the first $20\%$
of training steps; $N=16$ candidates; block length $\Delta=64$; resampling when
$\mathrm{ESS} < N/2$; systematic resampling; $\beta = 0.7$; the proposal
distribution is the student at temperature $1/\alpha$. The teacher's next-token
distributions are truncated to the top $64$ tokens and renormalized. The probability
retained by the truncation is logged, and it exceeds $0.98$ throughout except on the
first block of unusually long prompts.

\paragraph{Optimization.} $300$ steps, two prompts per step, learning rate
$5\times10^{-6}$ ($3\times10^{-6}$ for \texttt{Qwen3-14B-Base} and $1\times10^{-5}$ for the
\texttt{Qwen2.5-Math-1.5B} students) with $10$ warmup steps, no weight decay, gradient clipping at
$1.0$, 8-bit AdamW \citep{dettmers2022bit}, gradient checkpointing, microbatch
size $1$ in the gradient phase, $1024$ generated tokens during training ($2048$ for
\texttt{deepseek-math-7b-rl}, whose answers are longer).

\paragraph{Evaluation.} Four samples per problem, $200$ problems each from GSM8K
and MATH500, $2048$ generated tokens ($3072$ for
\texttt{Llama-3.1-8B-Instruct} and $4096$ for \texttt{deepseek-math-7b-rl}, whose
answers are longer). We use the prompt
builder, answer parser and grader of the reference power-sampling implementation
so that our numbers are directly comparable with published measurements on the
same models. Temperature-$1$ rows use each model's own generation configuration,
which is temperature $1$ with no truncation for the Qwen models and temperature
$0.6$ with nucleus sampling at $p=0.9$ for \texttt{Llama-3.1-8B-Instruct}.
Temperature-$1/\alpha$ rows set $\tau = 0.25$ explicitly and keep the rest of that
configuration, so \texttt{Llama-3.1-8B-Instruct} is sampled there with nucleus
sampling at $p=0.9$.

\paragraph{Prompt formats.} Each model is trained and evaluated with the same prompt
format. \texttt{deepseek-math-7b-rl} uses the instruction its model card specifies.

\paragraph{Training corpora.} GSM8K training-split questions throughout, except
Appendix~\ref{sec:openr1}, which uses $4000$ problem statements from
\texttt{OpenR1-Math-220k}. In both cases only the question text is used. For the
latter we remove any training prompt whose normalized form (lowercased,
alphanumeric characters only, first $400$ characters) matches any evaluation
problem, so no benchmark question appears in training.

\section{Validation of the evaluation code}
\label{app:harness}

Before training we evaluate the untrained model and compare with published
numbers for the same model, to establish that our decoding, parser and generation
length reproduce the setting those numbers were produced in.

\begin{center}\small
\begin{tabular}{lccc}
\toprule
untrained \texttt{Qwen2.5-Math-7B} & measured & published & difference \\
\midrule
MATH500, temperature $1$ & $50.25$ & $49.6$ & $+0.65$ \\
GSM8K, temperature $1$ & $59.25$ & $62.2$ & $-2.95$ \\
MATH500, temperature $1/\alpha$ & $70.13$ & $69.0$ & $+1.13$ \\
GSM8K, temperature $1/\alpha$ & $76.13$ & $77.3$ & $-1.17$ \\
\bottomrule
\end{tabular}
\end{center}

Differences are within $3$ points and run in both directions, consistent with
sampling variation on a $200$-problem subset rather than a systematic difference
in protocol. All comparisons in the paper use this evaluation.

\section{Complete accuracy tables}
\label{app:fullacc}

Table~\ref{tab:repl} reports the replication models at each model's default
generation configuration. The tables below give both decoding temperatures.

\begin{table}[h]
\caption{\texttt{Qwen3-8B-Base}, both decoding temperatures.}
\centering\small
\begin{tabular}{lcccc}
\toprule
 & \multicolumn{2}{c}{MATH500} & \multicolumn{2}{c}{GSM8K} \\
\cmidrule(lr){2-3}\cmidrule(lr){4-5}
 & pass@1 & best-of-4 & pass@1 & best-of-4 \\
\midrule
\multicolumn{5}{l}{\emph{temperature $1$}}\\
untrained & $65.50$ & $86.0$ & $78.00$ & $96.5$ \\
on-policy distillation & $75.00$ & $86.5$ & $93.00$ & $97.5$ \\
\textsc{oppd}, $\lambda=0$ & $\mathbf{78.25}$ & $\mathbf{87.5}$ & $\mathbf{95.13}$ & $\mathbf{99.0}$ \\
\midrule
\multicolumn{5}{l}{\emph{temperature $1/\alpha$}}\\
untrained & $78.50$ & $87.0$ & $93.75$ & $98.0$ \\
on-policy distillation & $79.00$ & $86.5$ & $95.00$ & $98.0$ \\
\textsc{oppd}, $\lambda=0$ & $\mathbf{81.00}$ & $86.5$ & $94.88$ & $97.0$ \\
\bottomrule
\end{tabular}
\end{table}

\begin{table}[h]
\caption{\texttt{Llama-3.1-8B-Instruct}. Lowering the decoding temperature gains
only $+0.6$ and $+1.25$ on this model, so improvements measured at its default
configuration (temperature $0.6$, nucleus sampling at $p=0.9$) come from training
rather than from decoding.}
\centering\small
\begin{tabular}{lcccc}
\toprule
 & \multicolumn{2}{c}{MATH500} & \multicolumn{2}{c}{GSM8K} \\
\cmidrule(lr){2-3}\cmidrule(lr){4-5}
 & pass@1 & best-of-4 & pass@1 & best-of-4 \\
\midrule
untrained, default configuration & $49.25$ & $69.5$ & $82.25$ & $92.5$ \\
untrained, temperature $1/\alpha$ & $49.88$ & $66.5$ & $83.50$ & $93.0$ \\
\midrule
on-policy distillation & $49.75$ & $65.5$ & $85.20$ & $94.0$ \\
\textsc{oppd}, $\lambda=0$ & $47.75$ & $63.0$ & $85.75$ & $91.5$ \\
\textsc{oppd}, $\lambda=1$ & $\mathbf{52.50}$ & $68.5$ & $85.38$ & $93.0$ \\
\textsc{oppd}, $\lambda=1$, OpenR1 prompts & $51.13$ & $\mathbf{69.0}$ & $\mathbf{85.88}$ & $\mathbf{94.0}$ \\
\bottomrule
\end{tabular}
\end{table}

\section{Complete divergence and absorbed-exponent measurements}
\label{app:fullmech}

\paragraph{Divergence to the power distribution for different numbers of
candidates.} Appendix~\ref{sec:replication} summarizes these measurements, and the
full table follows. The divergence $\hat D_2 = -\log(\mathrm{ESS}/N)$ is not
comparable across rows, because more candidates explore more of the distribution
and the measured spread of the weights grows with $N$ for any fixed proposal.
Comparisons are meaningful only within a row.

\begin{center}\small
\begin{tabular}{lccccc}
\toprule
& \multicolumn{2}{c}{$\mathrm{ESS}/N$} & \multicolumn{2}{c}{$\hat D_2$ per block} & \\
candidates $N$ & untrained & \textsc{oppd} & untrained & \textsc{oppd} & reduction \\
\midrule
$4$ & $0.610$ & $\mathbf{0.696}$ & $0.494$ & $\mathbf{0.362}$ & $26.6\%$ \\
$16$ & $0.490$ & $\mathbf{0.585}$ & $0.712$ & $\mathbf{0.536}$ & $24.7\%$ \\
$64$ & $0.427$ & $\mathbf{0.475}$ & $0.851$ & $\mathbf{0.744}$ & $12.6\%$ \\
\bottomrule
\end{tabular}
\end{center}

Because the ESS measures the divergence from the power distribution to the proposal
distribution, the trained model is also a more efficient proposal for the sampler
than the model it was trained from. Fewer candidates are therefore wasted if the
sampler is run after training.

\paragraph{Absorbed exponent on both sets of contexts, \texttt{Qwen2.5-Math-1.5B}
sweep.} Table~\ref{tab:dose} reports the measurements on each model's own answers.
Both sets of contexts are given here, with the token entropies. The base entropies
differ between rows because each row's contexts are written by a different model,
so only the fraction of the base-to-target entropy gap that training closes is
comparable across rows.

\begin{center}\small
\begin{tabular}{lccccc}
\toprule
 & \multicolumn{2}{c}{$\aeff(\pb\to\ps)$} & \multicolumn{3}{c}{entropy on own contexts (nats)} \\
\cmidrule(lr){2-3}\cmidrule(lr){4-6}
 & base ctx. & own ctx. & base & trained & target \\
\midrule
\textsc{oppd}, $\lambda=0$ & $1.207$ & $\mathbf{2.019}$ & $0.268$ & $0.087$ & $0.052$ \\
\textsc{oppd}, $\lambda=0.5$ & --- & $1.283$ & $0.218$ & $0.138$ & $0.045$ \\
\textsc{oppd}, $\lambda=1$ & $1.165$ & $1.243$ & $0.277$ & $0.203$ & $0.056$ \\
\textsc{oppd}, $\lambda=2$ & $1.137$ & $1.193$ & $0.300$ & $0.224$ & $0.062$ \\
baseline & $1.105$ & $1.144$ & $0.302$ & $0.241$ & $0.061$ \\
\bottomrule
\end{tabular}
\end{center}

\paragraph{Validation of the estimator.} On synthetic logits with a known exponent,
the estimator returns $a$ correctly to four decimal places for
$a \in \{1, 1.5, 2, 3, 4\}$. It returns $1.998$ when the sharpened model is also
perturbed by noise, and $-0.008$ for a model unrelated to the reference.

\paragraph{A resolution limit of the estimator.} The fitted slope in
\eqref{eq:aeff} is a covariance divided by the variance of $\log\pb$ under
weights $\pb$. When $\pb$ is close to deterministic, that variance approaches zero
and the slope is dominated by noise. For the $7$B and $8$B models, the entropy of
$\pb$ along the trained models' own answers is $0.096$ to $0.138$ nats, and all
trained models give $\aeff$ between $1.11$ and $1.25$, with no separation between
them. We therefore report $\aeff$ only where the entropy of $\pb$ is large enough
($0.22$ to $0.30$ nats in the \texttt{Qwen2.5-Math-1.5B} sweep), and use the direct
entropy measurement of Table~\ref{tab:mech} for the larger models. Outside this
range, values of $\aeff$ mostly reflect noise.

\section{Why the student at temperature 1 fails as a proposal}
\label{app:proposal}

Section~\ref{sec:temp} states that using the student at temperature $1$ as the
proposal distribution makes the sampler fail. The second term of
\eqref{eq:decomp}, $(\alpha-1)\log\ps(y_t)$, has a standard deviation of about
$4.6$ nats per token at $\alpha=4$. Across a block of $\Delta=64$ tokens, the
spread between candidates is therefore about $4.6\sqrt{64} \approx 37$ nats, against
a usable range of $\log N = \log 16 \approx 2.8$ nats. With this proposal,
$\mathrm{ESS}/N$ stays at $1/N$ from the first block onward, which makes the sampler
equivalent to selecting one candidate at random. The failure raises no error and
produces no unusual value in the training log. The normalized importance weights are
a softmax over the log importance weights, so a gap of $35$ nats between the two
largest gives the second a relative weight of $e^{-35} \approx 6\times10^{-16}$.
Resampling then replaces all $N$ candidates with copies of one. From that point on,
the candidates carry a single distinct answer and no longer approximate $\pi_\alpha$,
and the estimator of Theorem~\ref{thm:master} has a single sample. With the student
at temperature $1/\alpha$ as the proposal, this term cancels exactly, and the
measured $\mathrm{ESS}/N$ during training lies between $0.39$ and $0.64$ for every
model we train.

As a check on the algebra, setting the student equal to the teacher makes the
teacher--student difference term vanish, so by Theorem~\ref{thm:opt} the
incremental importance weights should be identical for every candidate. The measured
$\mathrm{ESS}/N$ in this configuration is $0.998$.

\section{Implementation correctness}
\label{app:verify}

Several results concern distributions rather than accuracy, so we verify the
implementation independently of any benchmark with fifty automated checks on small
randomly initialized models, which run the same code as training.

\begin{itemize}\itemsep2pt
\item Theorem~\ref{thm:master}, and the exactness of holding the weights fixed,
against brute-force enumeration and numerical differentiation: agreement to
$10^{-10}$ for $\beta \in \{0.25, 0.5, 1, 2, 5\}$.
\item Log-probabilities recorded during generation and recomputed in the gradient
step agree to $4.8\times10^{-7}$ with resampling forced at every block. This tests
token alignment and the reordering of the attention key-value cache for student and
teacher.
\item Loss and gradient match a direct reference implementation to $10^{-6}$ and
$3\times10^{-8}$, and accumulating gradients over microbatches matches a single
backward pass to $7\times10^{-9}$.
\item The total variation distance between the sampler's output distribution and
$\pi_\alpha$, computed by exhaustive enumeration on a small model, decreases
monotonically in $N$ ($0.166$, $0.095$, $0.065$ at $N=4,16,64$) against $0.112$ for
the untrained model. This is the $O(1/N)$ behavior expected of a correct sampler.
\item With the student equal to the teacher, Theorem~\ref{thm:opt} predicts equal
weights, and the measured $\mathrm{ESS}/N$ is $0.998$ against a theoretical $1.000$.
\end{itemize}

These checks found three defects before any reported run: the amplification of
training weights at resampling (Section~\ref{sec:impl}), activation memory that grew
with $N$ instead of with the microbatch size, and proposal log-probabilities that
were discarded.

\section{Correcting the weight advantage of stopping early}
\label{app:brevity}

Equation~\eqref{eq:renyi} implies that the log importance weight decreases at every
token. A candidate that emits the end-of-sequence token keeps its weight fixed while
unfinished candidates continue to lose weight, so stopping early gains weight for
reasons unrelated to answer quality. For long chains of reasoning, this would train
the model to stop reasoning early, and the loss curve would give no sign of it.

We charge each finished candidate the mean per-token increment in log importance
weight of the unfinished ones. This is the same constant for every finished
candidate at a given step, so the normalized weights among unfinished candidates are
unchanged, and only the comparison between finished and unfinished candidates is
corrected. As a diagnostic, we use the mean answer length weighted by the normalized
importance weights minus the unweighted mean length, which should be near zero.

We measure this diagnostic in a controlled setting where student and teacher are the
same model, so that the per-token decrease is identical across candidates and any
preference for length comes from the weight accounting:

\begin{center}\small
\begin{tabular}{lcc}
\toprule
 & weighted minus unweighted length & $\mathrm{ESS}/N$ \\
\midrule
without the correction & $-25.9$ tokens & $0.084$ \\
with the correction & $+0.13$ tokens & $0.997$ \\
\bottomrule
\end{tabular}
\end{center}

\section{Resampling and the training weights}
\label{app:resample}

Resampling duplicates candidates. The training weights already recorded for earlier
blocks are stored per candidate, so a duplicated candidate carries its ancestor's
recorded weight once per copy. Without a correction, the total training weight on an
earlier block grows with every resampling step (a block that survives $k$ steps is
amplified by up to $N^{k}$), and the opening tokens of every answer come to dominate
the loss. Neither the loss value nor any standard diagnostic shows this.

Duplication is how resampling represents weight, so the recorded training weights
must be renormalized to compensate. After each resampling step, we renormalize the
stored training weights of every earlier block to sum to one across candidates. The
test suite of Appendix~\ref{app:verify} checks this property automatically.

The sampler and the loss also keep separate weight accumulators. The sampler's
accumulator uses \eqref{eq:incw} and decides when to resample. Replacing it with the
training weight, which depends on $\beta$, would stop the sampler from approximating
$\pi_\alpha$. The loss accumulator uses \eqref{eq:lossweight}, is reset at every
block boundary, and supplies $\bar A^{(i)}$. Their per-token increments agree at
$\beta=1$, but for every $\beta$ they accumulate over different windows. We write
the training weight with the letter $A$ to keep it distinct from the importance
weight $W$.

\section{Selecting among finished answers}
\label{app:selection}

\begin{table}[h]
\caption{Percentage of candidates that are correct, against the total normalized
importance weight on correct candidates. If the teacher's probabilities ranked
finished answers usefully, the weight on correct candidates would exceed the
percentage correct.}
\centering\small
\begin{tabular}{llccc}
\toprule
model & proposal, $N$ & \% correct & weight on correct & difference \\
\midrule
\multirow{2}{*}{\texttt{1.5B}} & untrained, $16$ & $53.5$ & $53.0$ & $-0.45$ \\
 & trained, $16$ & $49.8$ & $49.7$ & $-0.06$ \\
\midrule
\multirow{3}{*}{\texttt{Qwen2.5-Math-7B}} & untrained, $16$ & $78.9$ & $78.8$ & $-0.10$ \\
 & \textsc{oppd} $\lambda{=}1$, $16$ & $76.4$ & $76.7$ & $+0.32$ \\
 & \textsc{oppd} $\lambda{=}0$, $16$ & $80.0$ & $80.1$ & $+0.15$ \\
\midrule
\multirow{6}{*}{\texttt{Qwen3-8B-Base}} & untrained, $4$ & $81.20$ & $81.04$ & $-0.16$ \\
 & untrained, $16$ & $82.40$ & $82.34$ & $-0.06$ \\
 & untrained, $64$ & $76.72$ & $76.63$ & $-0.09$ \\
 & \textsc{oppd}, $4$ & $81.30$ & $81.46$ & $+0.16$ \\
 & \textsc{oppd}, $16$ & $80.10$ & $80.18$ & $+0.08$ \\
 & \textsc{oppd}, $64$ & $80.06$ & $79.86$ & $-0.20$ \\
\bottomrule
\end{tabular}
\end{table}

\end{document}